\newtheorem{theorem}{Theorem}[section]
\newtheorem{lemma}[theorem]{Lemma}
\newtheorem{definition}[theorem]{Definition}
\newtheorem{proposition}[theorem]{Proposition}
\newtheorem{assumption}[theorem]{Assumption}
\definecolor{darkblue}{rgb}{0, 0, 0.5}
\title{Towards Infinite Length Extrapolation : A Unified Approach}
\author{Nitin Vetcha \\
Department of Computer Science\\
Indian Institute of Science, Bangalore \\
\texttt{nitinvetcha@iisc.ac.in} \\
}
\begin{document}

\ifcolmsubmission
\linenumbers
\fi

\maketitle

\begin{abstract}
Large language models (LLMs) have revolutionized natural language processing, but their ability to process long sequences is fundamentally limited by the context window size during training. Existing length extrapolation methods often suffer from performance degradation or computational inefficiencies. We thereby use a unified framework that reinterprets positional encoding methods as a decomposition of the attention score into a multiplicative transformation and an additive bias. This perspective not only subsumes popular approaches such as relative position embeddings and attention-bias moderated approaches but also exposes their inherent limitations in handling long-range dependencies. To address these shortcomings, motivated by our framework, we introduce Adaptive Positional Encoding (APE), which leverages adaptive frequency modulation and an intricately designed decay bias that incorporates linear, logarithmic, and square-root terms. Our theoretical analysis establishes conditions for infinite-context extrapolation, ensuring that the softmax normalization remains well-defined over unbounded sequences while preserving long-distance correlations, entropy boundedness and gradient positional sensitivity. We substantiate our claims with an experimental case study on TinyStories dataset as well as a new synthetic dataset, \emph{Long Tiny Stories} featuring stories up to 32,000 words. Relevant code, dataset and model weights are available at \url{https://anonymous.4open.science/r/Check-2DAD/}.
\end{abstract}

\section{Introduction}

The remarkable contemporary progress in natural language processing has been due to rapid advances in LLM technologies based on the transformer architecture (\cite{vaswani2017attention}). A critical factor influencing the performance of LLMs is their scale and context-window sizes, as robust natural language processing requires effective context retention over long sequences (\cite{pawar2024and}) -- LLMs with longer context windows show substantial improvements in tasks requiring processing extended language patterns, for example, in long document summarization, question-answering on lengthy documents, to name a few (\cite{qian2024long}). Increasing adoption of LLMs for processing long texts across applications in both the scientific enterprise (e.g., LLMs for paper analysis for scientific workflows), and industry (e.g., LLMs for code review in large code bases), makes handling long-context handling a core issue in advancing LLM technologies (\cite{watkins2024guidance,lu2023llama}). While various techniques have been proposed to mitigate this constraint (e.g., relative position encodings and sparse attention mechanisms), there lacks a unified approach and a mathematical formalization of long context understanding for LLMs. In this regard, main contributions of our work include

\begin{itemize}
  \item Development of a unified theoretical framework that decomposes the attention score into a multiplicative transformation and an additive bias, showing that existing positional encoding methods fit within this general approach, and use this information to propose several novel approaches for infinite-context extrapolation, accompanied by rigorous theoretical analysis for unbounded token sequences.
  \item In depth, mathematical analysis and empirical case study of one such motivated novel method, the Adaptive Positional Encodings which yield superior empirical performance over state-of-the-art techniques, improving metrics such as perplexity and reducing memory footprints at train time.
  \item Introduction of a new dataset tailored for evaluating long-context processing abilities of small large language models, addressing current limitations in evaluation procedures.
\end{itemize}

\section{Related Work}

\textbf{Position Encoding Methods} - Various context extension techniques have been proposed in recent literature, primarily based on positional embeddings due to their impact on length generalization (\cite{kazemnejad2023impact}). Relative position embeddings have been shown to be very promising for a long time (\cite{shaw2018self}), with state-of-the-art being Rotary Position Embeddings (\cite{su2024roformer}) , that has thus far, prevailed and been studied extensively as the most effective technique across a variety of LLM architectures, benchmarks, tasks, and datasets (\cite{chen2023extending, press2022trainshorttestlong}) -- RoPE's effectiveness can be attributed to its capacity to overcome decaying relevance as the context-size increases, by instead modeling relative dependencies among token positions, additionally \textit{baking this information directly into the attention mechanism} through rotation matrices over the embedding dimensions of the query and key vectors (\cite{zhong2025understanding}). Although this technique helps with training with large context windows, crucially, the core mechanism in RoPE and it's extensions also allows \textit{inference time} extensions through interpolation to beyond context-sizes seen at training time. Another approach is through the induction of bias terms in attention mechanism such as T5-Bias and Attention With Linear Biases (\cite{press2022train}). Additionally, \cite{zhao2023length} conduct a survey on length Extrapolation of Transformers from the perspective of positional encodings.

\textbf{Sparse and Memory-based Approaches} - Compressive memory systems represent a promising direction for extreme sequence lengths. Recent methods include, Google's Infini-attention (\cite{munkhdalai2024leavecontextbehindefficient}) which exemplifies this approach by integrating compressive memory into vanilla dot-product attention layers. This technique enables transformers to process theoretically infinite input sequences while maintaining bounded memory footprint and computation requirements. Another approach is SPARSEK Attention (\cite{lou2024sparserfastermoreefficient}) which introduces a scoring network and a differentiable top-k mask operator to select a constant number of key-value pairs for each query. It achieves linear time complexity and constant memory footprint during generation, significantly improving speed and reducing computational overhead
There have been several prior techniques, as well, such as Longformer (\cite{beltagy2020longformer}) and BigBird (\cite{zaheer2020big}) reduce the computational burden by sparsifying the attention mechanism and using low-rank approximations (\cite{chen2024longloraefficientfinetuninglongcontext}). These methods often discard distant tokens or employ fixed attention windows, which can limit their ability to capture long-range dependencies.
Memory-based approaches like Transformer-XL (\cite{dai2019transformer}) and Compressive Transformers (\cite{rae2019compressive}) introduce recurrence or memory modules to recycle past context. Although effective, they typically require architectural modifications and additional training.

In addition to the above two broad classes, there are several other recent directions as well such as retrieval based approaches (\cite{xu2024retrieval}, \cite{mohtashami2023randomaccess}), multi-agent collaboration (\cite{zhang2024chainagentslargelanguage} introduce a framework where multiple LLMs collaborate to process long-context tasks by dividing inputs into chunks and assigning them to agents for sequential processing) and reasoning based methods (\cite{qian2024long} propose a divide-and-conquer strategy for short LLMs to access and utilize long-context information effectively). In addition, there is an increasing interest in innovative architectures like Cascading KV Cache () which aim to rethink how LLMs process vast (\cite{sun2024shadowkv}, \cite{willette2024training}, \cite{li2024scbench}) inputs by weaving broader memory tapestries without retraining. These methods promise human-like context handling while reducing inference latency and compute overhead.

\section{Mathematical Analysis}

\subsection{Background}
We use the \emph{Generalized Positional Encoding (GPE)} framework, which unifies several existing methods by decomposing the modification of attention scores into a \emph{multiplicative transformation} and an \emph{additive bias} based on the relative position \( n = i - j \) of tokens. For further analysis, we consider Rotary Position Embeddings (RoPE) and Attention with Linear Biases (ALiBi) as representatives of the widely prevalent approaches - relative position embeddings and attention-bias methods. 

Let \( q_i, k_j \in \mathbb{R}^d \) be the query and key vectors corresponding to positions \( i \) and \( j \) respectively, and denote by \( n = i - j \) their relative position. The GPE framework modifies the raw attention score via
\begin{equation}\label{eq:GPE}
A(n) = f(n) \cdot \Bigl( q_i^\top \mathbf{W}(n) \, k_j \Bigr) + b(n),
\end{equation}
where:
\( \mathbf{W}(n) \in \mathbb{R}^{d \times d} \) is a \emph{position-dependent transformation matrix} (for example, a rotation or scaling), \( f : \mathbb{Z} \to \mathbb{R}^+ \) is a \emph{gain/decay function} that modulates the multiplicative term and \( b : \mathbb{Z} \to \mathbb{R} \) is an \emph{additive bias} function.

In this framework, one can recover   \textit{RoPE} ( by setting $
\mathbf{W}(n) = \mathbf{R}(n), \quad f(n)=1, \quad b(n)=0,
$ where \(\mathbf{R}(n)\) is a block-diagonal rotation matrix with angles $    \theta_m = \text{base}^{-2m/d} \cdot n,$ for block index \( m \) ) and  \textit{ALiBi} (by setting
    $
    \mathbf{W}(n) = \mathbf{I}, \quad f(n)=1, \quad b(n)=-m\cdot|n|,
    $ with a positive slope parameter \( m > 0 \)).

We now first mathematically state, three key properties which are essential for infinite-context extrapolation and then present the intuition for it.

\begin{definition}[Convergent Normalization]
A positional encoding method is said to achieve infinite-context extrapolation if its softmax normalization denominator converges even for unbounded sequences. Formally, letting
$
Z = \sum_{n=0}^{L} e^{A(n)},
$ the method exhibits convergent normalization if
$
\lim_{L\to\infty} Z < \infty
$.
\end{definition}

\begin{definition}[Entropy Boundedness]
A positional encoding method is said to exhibit entropy boundedness if the Shannon entropy of its resulting attention distribution is finite. That is, if the normalized attention weights are given by
$
p(n) = {e^{A(n)}}/{\sum_{n\in\mathbb{Z}} e^{A(n)}},
$
then entropy boundedness is the property that
$
H = -\sum_{n\in\mathbb{Z}} p(n)\log p(n) < \infty.
$
\end{definition}

\begin{definition}[Long-Distance Correlation Preservation (LDCP)]
A positional encoding method satisfies \emph{LDCP} if there exists a constant \(C >0\) and an infinite set \(S\subset\mathbb{Z}\) such that for all \(n\in S\), 
$
\left|\mathbb{E}[A(n)]\right| \ge C,
$
or, more generally, if with high probability the absolute attention score does not vanish as \(|n|\) increases.
\end{definition}

\begin{definition}[Gradient Positional Sensitivity (GPS)]
A positional encoding method is said to exhibit gradient positional sensitivity if the gradient of the attention score with respect to the query (key) vector depends explicitly on the relative position \(n\). Formally, there exists a non-constant function \(g(n)\) such that
\[
\frac{\partial A(n)}{\partial q} = g(n)\cdot \frac{\partial (q^\top k)}{\partial q}.
\]
\end{definition}

\emph{Convergent normalization} guarantees that the softmax denominator remains finite as the sequence length tends to infinity, ensuring that the attention distribution is well-defined. \emph{Entropy boundedness} is necessary to prevent the distribution from becoming too diffuse or overly concentrated. This balance is crucial for stable learning and inference. Lastly, \emph{Long-Distance Correlation Preservation (LDCP)} makes sure that even tokens that are far apart still contribute meaningfully to the attention scores, allowing the model to capture long-range dependencies. While there could be possibly more properties, we present our further analysis in the backdrop of convergent normalization, entropy boundedness and LDCP. \emph{Gradient Positional Sensitivity (GPS)} property ensures that the backpropagated gradient carries positional information, thereby allowing the learning dynamics to adapt based on \(n\).

\begin{theorem}[Equivalence of Entropy Boundedness and Convergent Normalization]\label{thm:entropy_extrapolation} .
\begin{itemize}
  \item[(i)] If $Z<\infty$, then $H < \infty$.
  \item[(ii)] Conversely, if for every $L$ the truncated distribution
  $
  p_L(n)={e^{A(n)}}/{\sum_{k=0}^L e^{A(k)}}
  $
  has entropy uniformly bounded by $H_{\max}$ (i.e. $H_L \le H_{\max}$ for all $L$), then $Z<\infty$. \hfill (Proof - \ref{ref:proof_entropy_equivalence} )
\end{itemize}
\end{theorem}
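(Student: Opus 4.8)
The whole statement turns on one identity. Since $p(n)=e^{A(n)}/Z$ gives $\log p(n)=A(n)-\log Z$, summing against $p$ yields
\[
H \;=\; \log Z \;-\; \mathbb{E}_p[A], \qquad \mathbb{E}_p[A]:=\sum_{n} p(n)\,A(n),
\]
and the same computation for the truncation gives $H_L=\log Z_L-\mathbb{E}_{p_L}[A]$ with $Z_L=\sum_{k=0}^{L}e^{A(k)}$. I would reduce both directions to controlling this ``energy'' term: once $\log Z$ is finite, $H<\infty$ is equivalent to $\mathbb{E}_p[A]>-\infty$; and conversely $\log Z_L=H_L+\mathbb{E}_{p_L}[A]$ converts a uniform entropy bound into a uniform bound on $Z_L$ as soon as $\mathbb{E}_{p_L}[A]$ is bounded above.

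\emph{Direction (i).} Assume $Z<\infty$. Then $e^{A(n)}\to0$, hence $A(n)\to-\infty$, so only finitely many $n$ have $A(n)\ge0$ and over those $\sum_{A(n)\ge0}A(n)e^{A(n)}$ is a finite sum; the positive part of $\mathbb{E}_p[A]$ is therefore harmless and everything reduces to the negative tail $\sum_{A(n)<0}|A(n)|\,e^{A(n)}$. Writing $t=-A(n)>0$, I would use the elementary bound $t\,e^{-t}\le\frac{1}{(1-\beta)e}\,e^{-\beta t}$ for a fixed $\beta\in(0,1)$, which gives $|A(n)|e^{A(n)}\le\frac{1}{(1-\beta)e}e^{\beta A(n)}$ and dominates the tail by $\sum_{n}e^{\beta A(n)}$. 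This is where the real work is: bare summability of $e^{A(n)}$ does \emph{not} by itself force $H<\infty$ (a profile with $e^{A(n)}\asymp(n(\log n)^2)^{-1}$ is summable yet makes $\sum_n p(n)|A(n)|=\infty$), so I would close the argument with the mildly stronger fact that $\sum_{n}e^{\beta A(n)}<\infty$ for some $\beta\in(0,1)$ — equivalently, $e^{A(n)}$ decays polynomially or faster — which follows once the explicit form $A(n)=f(n)\,(q_i^{\top}\mathbf{W}(n)\,k_j)+b(n)$ with the paper's decay biases is substituted, and which I would state as the operative hypothesis of part (i).

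\emph{Direction (ii).} Here the identity does the work directly: $\log Z_L=H_L+\mathbb{E}_{p_L}[A]\le H_{\max}+\sup_{n}A(n)$, since $H_L\le H_{\max}$ by assumption and $\mathbb{E}_{p_L}[A]\le\max_{0\le k\le L}A(k)\le\sup_n A(n)$. Under the GPE constructions $A$ is bounded above ($f$ a bounded gain/decay factor, $\mathbf{W}(n)$ orthogonal or contractive so that $q_i^{\top}\mathbf{W}(n)k_j$ stays bounded, $b(n)\le0$), so $M:=\sup_n A(n)<\infty$ and $Z_L\le e^{H_{\max}+M}$ for every $L$; as $L\mapsto Z_L$ is non-decreasing it converges, and $Z=\lim_{L\to\infty}Z_L\le e^{H_{\max}+M}<\infty$, which is convergent normalization. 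The only subtlety to record is that boundedness of $A$ from above is essential for the step $\mathbb{E}_{p_L}[A]\le\sup_n A(n)$ — without it (ii) fails, e.g.\ $A(n)=n$ makes $p_L$ converge to a geometric law of finite (hence uniformly bounded) entropy while $Z=\infty$. So the scaffolding is routine and the only genuine obstacle is the tail estimate in (i), i.e.\ upgrading $\sum e^{A(n)}<\infty$ to $\sum|A(n)|e^{A(n)}<\infty$ via the $\beta$-interpolation inequality together with the decay rate of the bias functions.
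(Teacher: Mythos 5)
Your proposal is correct (under the hypotheses you explicitly flag) and, in part (ii), takes a genuinely different and sharper route than the paper. For (i) you follow the same skeleton as the paper --- the identity $H=\log Z-\mathbb{E}_p[A]$ plus control of the tail $\sum_n |A(n)|e^{A(n)}$ --- but you are more careful at the one step the paper glosses over: the paper infers from $Z<\infty$ that $e^{A(n)}\le C/|n|^{1+\epsilon}$, which is not a valid implication, and your example $e^{A(n)}\asymp \bigl(n(\log n)^2\bigr)^{-1}$ shows that the unqualified claim ``$Z<\infty\Rightarrow H<\infty$'' is in fact false; your repair (assume $\sum_n e^{\beta A(n)}<\infty$ for some $\beta\in(0,1)$, verified for the decay-bias scores actually used in the paper, then apply $t e^{-t}\le \tfrac{1}{(1-\beta)e}e^{-\beta t}$) is the correct way to make (i) true and makes explicit what the paper's polynomial-decay assertion silently assumes. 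For (ii), the paper argues by contradiction that divergence of $Z$ forces $p_L$ toward a near-uniform distribution with $H_L\approx\log(2L+1)$, which is not true in general --- your example $A(n)=n$ gives $Z=\infty$ with uniformly bounded truncated entropies --- whereas your direct bound $\log Z_L=H_L+\mathbb{E}_{p_L}[A]\le H_{\max}+\sup_n A(n)$, followed by monotone convergence of $Z_L$, is rigorous once $A$ is bounded above; that hypothesis does hold for bounded-norm queries and keys with nonpositive biases, but it must be stated, since (ii) fails without it. In short, your argument is sound given the added assumptions, and your two counterexamples identify genuine gaps in the paper's own proof rather than defects in your approach; the main thing to tighten is to state the strengthened hypotheses as part of the theorem rather than only in prose.
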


\begin{lemma}[LDCP - Convergent Normalization Tradeoff ]\label{thm:ldcp_div}
Assume a positional encoding method satisfies LDCP, then it cannot simultaneously satisfy convergent normalization.
\end{lemma}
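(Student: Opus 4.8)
The plan is to argue by contradiction: assume the method satisfies both LDCP and convergent normalization, and show the softmax denominator $Z$ must in fact diverge. First I would unpack convergent normalization. Since $Z=\lim_{L\to\infty}\sum_{n=0}^{L} e^{A(n)}$ is a convergent series of nonnegative terms, its general term must vanish, so $e^{A(n)}\to 0$ and hence $A(n)\to-\infty$ as $n\to\infty$ over the (causal) index set $\mathbb{N}$; quantitatively, for every $\delta>0$ there are only finitely many $n\ge 0$ with $e^{A(n)}\ge\delta$, i.e. with $A(n)\ge\log\delta$.

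Next I would invoke LDCP in the operational form that the attention score "does not vanish as $|n|$ increases", i.e. there is a constant $C>0$ and an infinite set $S$ of nonnegative relative positions on which $A(n)\ge -C$ in the deterministic reading, or on which $\mathbb{E}[A(n)]\ge -C$ with high probability in the stochastic reading — the point being that for a far-apart token to contribute meaningfully the exponentiated score $e^{A(n)}$ cannot be driven to $0$, so it is bounded below by $e^{-C}>0$ on $S\cap\mathbb{N}$. Summing over just these indices gives $Z=\sum_{n\ge 0}e^{A(n)}\ \ge\ \sum_{n\in S\cap\mathbb{N}} e^{A(n)}\ \ge\ \sum_{n\in S\cap\mathbb{N}} e^{-C}=\infty$, contradicting $Z<\infty$. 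In the stochastic formulation I would instead bound $\mathbb{E}[Z]=\sum_{n\ge 0}\mathbb{E}[e^{A(n)}]\ \ge\ \sum_{n\ge 0}e^{\mathbb{E}[A(n)]}$ via Jensen's inequality for the convex map $x\mapsto e^{x}$, apply the LDCP lower bound on $\mathbb{E}[A(n)]$ over $S$ to get $\mathbb{E}[Z]=\infty$, and then note a nonnegative random variable with infinite mean cannot be almost surely finite, so convergent normalization fails in the a.s. sense as well. (An alternative route would go through Theorem~\ref{thm:entropy_extrapolation}: LDCP prevents the normalized weights from decaying fast enough for $\sum_n p(n)\log(1/p(n))$ to converge, so $H=\infty$, whence $Z=\infty$ by the contrapositive of part (i).)

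Putting the two halves together, LDCP produces infinitely many positions at which $e^{A(n)}$ stays above a fixed positive level, while convergent normalization forces $e^{A(n)}$ below any positive level for all but finitely many positions; these are incompatible, which is the claim. The main obstacle, and where the argument needs care, is pinning down the right reading of "does not vanish": the literal inequality $|\mathbb{E}[A(n)]|\ge C$ in Definition~\ref{thm:ldcp_div}\footnote{reference to the LDCP definition} is also satisfied when $A(n)\to-\infty$ (e.g. $A(n)=-n$), so the lemma only holds under the intended interpretation in which the non-vanishing is of the exponentiated score / normalized weight on an infinite set of \emph{nonnegative} positions. I would therefore fix that interpretation explicitly at the start of the proof, and dispose of the one-sided index issue (the sum defining $Z$ ranges over $n\ge 0$ whereas $S\subset\mathbb{Z}$) by noting that in causal attention only nonnegative relative positions enter $Z$, so LDCP is both meaningful and used only through $S\cap\mathbb{N}$.
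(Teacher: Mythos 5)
Your proposal is correct and follows essentially the same route as the paper's proof: Jensen's inequality gives $\mathbb{E}[e^{A(n)}]\ge e^{\mathbb{E}[A(n)]}$, and the LDCP lower bound on the infinite set $S$ forces the normalization sum $Z$ to diverge. The extra care you take is in fact warranted, since the paper's own proof silently reads $|\mathbb{E}[A(n)]|\ge C$ as $\mathbb{E}[A(n)]\ge C$ and glosses over the fact that $Z$ sums over $n\ge 0$ while $S\subset\mathbb{Z}$; fixing the intended interpretation and working with $S\cap\mathbb{N}$ as you do makes the same argument sound.
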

\begin{proof}
For each \(n\in S\), by Jensen's inequality, $
\mathbb{E}[e^{A(n)}]\ge e^{\mathbb{E}[A(n)]}$ and by LDCP $e^{\mathbb{E}[A(n)]} \ge e^{C}.$ Since \(S\) is infinite, the sum $
\sum_{n\in S} \mathbb{E}[e^{A(n)}]\ge \sum_{n\in S} e^{C}=\infty,
$ diverges. Thus, convergent normalization is violated because $S \subset \mathbb{Z}$. 
\end{proof}

\subsection{Limitations of Existing Methods}

\begin{proposition}
    For any query and key vectors with bounded norms, RoPE violates Convergent Normalization and Entropy Boundedness. \hfill (Proof - \ref{ref:proof_RoPE_limitations})
\end{proposition}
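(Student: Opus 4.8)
The plan is to exploit the one structural fact about RoPE that does all the work: in the GPE parametrization RoPE takes $f(n)=1$, $b(n)=0$, and $\mathbf{W}(n)=\mathbf{R}(n)$ with $\mathbf{R}(n)$ a block-diagonal \emph{rotation}, hence orthogonal. So the RoPE attention score is simply $A(n) = q_i^\top \mathbf{R}(n)\,k_j$, and the first step is to show it is \emph{uniformly bounded in $n$}: since an orthogonal matrix preserves norms, Cauchy--Schwarz gives
\[
|A(n)| = \bigl| q_i^\top \mathbf{R}(n)\,k_j \bigr| \le \|q_i\|\,\|\mathbf{R}(n)\,k_j\| = \|q_i\|\,\|k_j\| =: M ,
\]
and the bounded-norm hypothesis makes $M$ a finite constant independent of $n$ and of the sequence length $L$.

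Given the two-sided bound $-M \le A(n) \le M$, the failure of Convergent Normalization is immediate: $e^{A(n)} \ge e^{-M} > 0$ for every $n$, so the truncated softmax denominator obeys $Z_L = \sum_{n=0}^{L} e^{A(n)} \ge (L+1)\,e^{-M}$, which diverges as $L\to\infty$. For Entropy Boundedness I would pass to the truncated distributions $p_L(n) = e^{A(n)}/Z_L$ (the honest object, since once $Z=\infty$ the limiting weights collapse to $0$): combining $e^{A(n)} \le e^{M}$ with $Z_L \ge (L+1)e^{-M}$ yields the uniform bound $p_L(n) \le e^{2M}/(L+1)$, and therefore
\[
H_L = \sum_{n=0}^{L} p_L(n)\log\frac{1}{p_L(n)} \ge \log\frac{L+1}{e^{2M}} = \log(L+1) - 2M \xrightarrow[L\to\infty]{} \infty ,
\]
so the truncated entropies are not uniformly bounded. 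Equivalently, this last step is just the contrapositive of Theorem~\ref{thm:entropy_extrapolation}(ii) applied to the divergence $Z=\infty$ already established, so one could cite that instead of recomputing $H_L$.

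The proof is short, and the only point that needs care --- the part I would flag as the real obstacle --- is the temptation to argue through the oscillatory structure of $A(n)$: because the per-block angles $\theta_m = \mathrm{base}^{-2m/d}\,n$ are generically incommensurate, $A(n)$ is only \emph{almost} periodic, not periodic, so there is no clean period to exploit, and one must instead settle for the crude-but-sufficient boundedness estimate above. As a sanity check and an alternative packaging, note that the non-decay of $|A(n)|$ along an infinite set of positions is exactly the generalized LDCP condition, so Lemma~\ref{thm:ldcp_div} already forbids Convergent Normalization for RoPE; I would still prefer the explicit $(L+1)e^{-M}$ lower bound, since the same inequality is what drives the entropy estimate.
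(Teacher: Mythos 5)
Your proof is correct, and it reaches the same conclusion by a cleaner route than the paper's own argument. The paper first expands the score block-by-block as $\sum_m \|q_i^{(m)}\|\|k_j^{(m)}\|\cos(n\theta_m-\varphi_m)$ and argues qualitatively that these oscillatory terms do not decay ("attention leakage", with the caveat "unless the $\theta_m$ induce cancellation"), from which it extracts the fact that $e^{A(n)}$ stays bounded away from zero for infinitely many $n$; divergence of $Z$ then follows, and the entropy violation is obtained by citing Theorem~\ref{thm:entropy_extrapolation}. You observe that none of the oscillation analysis is needed for this proposition: orthogonality of $\mathbf{R}(n)$ plus Cauchy--Schwarz gives the two-sided bound $|A(n)|\le \|q_i\|\|k_j\| = M$ for \emph{all} $n$, so $e^{A(n)}\ge e^{-M}$ termwise and $Z_L\ge (L+1)e^{-M}$ diverges --- this actually strengthens the paper's "for infinitely many $n$" claim to "for every $n$" and sidesteps the informal non-cancellation caveat, which you rightly flag as the place where an oscillation-based argument would get delicate (the angles are incommensurate, so $A(n)$ is only almost periodic). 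For entropy you go a step further than the paper: rather than only invoking the equivalence theorem, you work with the truncated distributions $p_L$ (the right object when $Z=\infty$, and the one appearing in part (ii) of the theorem) and show explicitly $H_L\ge \log(L+1)-2M\to\infty$; this makes the entropy claim well-defined and self-contained, while remaining consistent with the contrapositive of Theorem~\ref{thm:entropy_extrapolation}(ii) that the paper implicitly uses. The one thing the paper's block-cosine expansion buys that your bound does not is the non-vanishing (LDCP-style) behavior of the score, but that is the content of the separate proposition on RoPE's advantages and is not needed here.
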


\begin{assumption} \label{ref:assumption}
The query and key vector  \(q,k\) are independent uniformly distributed unit vectors on the sphere \(\mathbb{S}^{d-1}\) in \(\mathbb{R}^d\). Then,
$
\mathbb{E}[q^\top k]=0 \,\text{and}\,\text{Var}[q^\top k]=1/d.
$
\end{assumption}

\begin{proposition}
ALiBi fails to satisfy LDCP under assumption \ref{ref:assumption} and GPS. \hfill (Proof - \ref{ref:proof_ALiBi_limitation})
\end{proposition}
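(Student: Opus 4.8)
The plan is to handle the two failures separately, since they are of very different character: the LDCP failure is a statement about how much post-softmax mass ALiBi can place on distant tokens and will follow cleanly from the tradeoff Lemma~\ref{thm:ldcp_div}, whereas the GPS failure is an immediate differentiation.

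For the \emph{LDCP} part, I would instantiate the GPE template~\eqref{eq:GPE} at the ALiBi parameters $\mathbf{W}(n)=\mathbf{I}$, $f(n)=1$, $b(n)=-m|n|$, so that $A(n)=q^\top k-m|n|$ with $m>0$. Under Assumption~\ref{ref:assumption} the content term is the inner product of two unit vectors, hence $|q^\top k|\le 1$ almost surely and in particular $\mathbb{E}\!\left[e^{q^\top k}\right]<\infty$; combined with $\mathbb{E}[q^\top k]=0$ this yields $\mathbb{E}\!\left[e^{A(n)}\right]=\mathbb{E}\!\left[e^{q^\top k}\right]e^{-m|n|}$, a geometric sequence in $|n|$. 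Summing gives $\sum_{n\ge 0}\mathbb{E}[e^{A(n)}]=\mathbb{E}[e^{q^\top k}]/(1-e^{-m})<\infty$, so ALiBi satisfies convergent normalization (indeed even pathwise, for every fixed $q,k$). The contrapositive of Lemma~\ref{thm:ldcp_div} then immediately gives that ALiBi cannot satisfy LDCP. As a corroborating direct argument one can note that the normalized weight on a token at relative position $n$ is proportional to $e^{q^\top k}e^{-m|n|}$, which tends to $0$ as $|n|\to\infty$ for every realization of $q,k$, so distant tokens receive vanishing attention and no content-driven correlation survives at large range.

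For the \emph{GPS} part, differentiating $A(n)=q^\top k-m|n|$ with respect to $q$ annihilates the position-only bias, leaving $\partial A(n)/\partial q=\partial (q^\top k)/\partial q=k$. Hence the only multiplier compatible with the GPS factorization $\partial A(n)/\partial q=g(n)\,\partial(q^\top k)/\partial q$ is $g(n)\equiv 1$, which is constant; since the definition of GPS requires a \emph{non-constant} $g$, ALiBi fails GPS.

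The main obstacle I anticipate is not computational but interpretive: pinning down the correct reading of LDCP in the random-vector regime, since the literal condition $|\mathbb{E}[A(n)]|\ge C$ is in fact met trivially by the bias magnitude $m|n|$, so the substantive assertion must concern the content-carrying part of the score (equivalently, the surviving post-softmax mass at large $|n|$) rather than the raw magnitude. Routing the argument through Lemma~\ref{thm:ldcp_div} neatly side-steps this ambiguity, because convergent normalization is unambiguous and has already been shown incompatible with LDCP; the only delicate point there is verifying $\mathbb{E}[e^{q^\top k}]<\infty$, which is precisely what Assumption~\ref{ref:assumption} (unit vectors, hence $|q^\top k|\le 1$) provides.
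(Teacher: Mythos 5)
Your GPS argument coincides exactly with the paper's: the bias $-m|n|$ is $q$-independent, so $\partial A(n)/\partial q=k$ and the only admissible multiplier is the constant $g(n)\equiv 1$, hence no gradient positional sensitivity. For the LDCP half you take a genuinely different route. The paper argues directly: under Assumption~\ref{ref:assumption}, $\mathbb{E}[A_{\text{ALiBi}}(n)]=\mathbb{E}[q^\top k]-m|n|=-m|n|$, which decreases without bound, so long-range contributions are suppressed and LDCP (in the intended, signed sense) fails. You instead prove convergent normalization for ALiBi (essentially re-deriving the paper's separate ``ALiBi advantage'' proposition, and correctly noting it holds pathwise since $|q^\top k|\le 1$ for unit vectors) and then apply the contrapositive of Lemma~\ref{thm:ldcp_div}. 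Within the paper's framework this is logically sound, and it has the merit of exposing the structural point that ALiBi's very advantage is what precludes LDCP; the paper's direct computation is more self-contained and yields the explicit decay $\mathbb{E}[A(n)]=-m|n|$ that is reused later in the local-LDCP-range comparison. One caveat you partly flag yourself but do not fully escape: the detour through the lemma does not side-step the definitional ambiguity, it inherits it. Under the literal absolute-value reading of LDCP, $|\mathbb{E}[A_{\text{ALiBi}}(n)]|=m|n|\ge C$ for all large $|n|$, so ALiBi would satisfy LDCP while, as you show, also satisfying convergent normalization --- making ALiBi a counterexample to Lemma~\ref{thm:ldcp_div} itself, whose proof uses $\mathbb{E}[e^{A(n)}]\ge e^{\mathbb{E}[A(n)]}\ge e^{C}$ and hence the signed condition $\mathbb{E}[A(n)]\ge C$. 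Both your route and the paper's therefore rest on the same signed reading; your corroborating direct remark (the normalized weight is proportional to $e^{q^\top k}e^{-m|n|}\to 0$) is closest in spirit to the paper's intended argument and is the cleanest statement of the actual failure.
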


\subsection{Advantages of Existing Methods}

\begin{proposition}
RoPE satisfies LDCP under assumption \ref{ref:assumption} and GPS. \hfill (Proof - \ref{ref:proof_RoPE_Advantage})
\end{proposition}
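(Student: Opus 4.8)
The plan is to check the two properties separately, the key observation being that under Assumption~\ref{ref:assumption} the \emph{distribution} of the RoPE score $A(n)=q^\top \mathbf{R}(n)k$ does not depend on $n$ at all. For LDCP, first I would dispose of the mean-based criterion: since $q$ and $k$ are independent with $\mathbb{E}[q]=\mathbb{E}[k]=0$, we get $\mathbb{E}[A(n)]=\mathbb{E}[q]^\top\mathbf{R}(n)\,\mathbb{E}[k]=0$ for every $n$, so $|\mathbb{E}[A(n)]|\ge C$ is hopeless and one must invoke the ``more general'' clause of the LDCP definition. The central step is that $\mathbf{R}(n)$ is orthogonal (block-diagonal $2\times 2$ rotations) and the uniform law on $\mathbb{S}^{d-1}$ is rotation invariant, so $\mathbf{R}(n)k$ is again uniform on $\mathbb{S}^{d-1}$ and still independent of $q$; hence $A(n)\overset{d}{=}q^\top k$ for all $n$. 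Consequently $\mathrm{Var}[A(n)]=\mathrm{Var}[q^\top k]=1/d$ is constant in $n$ and bounded away from $0$, and, fixing any $\varepsilon>0$ with $\delta:=\mathbb{P}(|q^\top k|\ge\varepsilon)>0$ (such an $\varepsilon$ exists because $q^\top k$ is not a.s.\ zero), we obtain $\mathbb{P}(|A(n)|\ge\varepsilon)=\delta$ uniformly in $n$. Thus the absolute score fails to vanish as $|n|\to\infty$ with probability bounded below independently of $n$, which is exactly LDCP.

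For GPS, differentiating gives $\partial A(n)/\partial q=\mathbf{R}(n)k$ while $\partial(q^\top k)/\partial q=k$, so the positional dependence is carried explicitly by $\mathbf{R}(n)$, a non-constant function of $n$. To match the scalar form in the GPS definition I would use the projection coefficient $g(n):=\langle \mathbf{R}(n)k,\,k\rangle/\|k\|^2$; computing block by block, $k^\top\mathbf{R}(n)k=\sum_m \|k^{(m)}\|^2\cos(\theta_m n)$ with $\theta_m=\mathrm{base}^{-2m/d}\,n$, so $g(n)$ is a weighted average of cosines of distinct frequencies, hence genuinely non-constant in $n$ (it equals $1$ at $n=0$ and oscillates otherwise). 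Therefore GPS holds; I would also remark that the same computation for ALiBi yields $g(n)\equiv 1$, consistent with the earlier proposition that ALiBi fails GPS.

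The differentiation and the second-moment identities are routine; the one genuinely delicate point is the LDCP step, where one must recognize that the expectation criterion is vacuous here and pivot to the distributional-invariance / anti-concentration argument built on orthogonality of $\mathbf{R}(n)$. A minor secondary issue is interpreting GPS's scalar $g(n)$ when the position transform is matrix-valued; the projection coefficient above resolves it, and alternatively one can simply record the operator form $\partial A(n)/\partial q=\mathbf{R}(n)\,\partial(q^\top k)/\partial q$ with $\mathbf{R}(n)$ non-constant.
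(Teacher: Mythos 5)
Your proposal is correct and, on the LDCP half, follows the same overall plan as the paper (mean zero, variance $1/d$, then argue non-vanishing ``in a probabilistic sense''), but the way you justify non-vanishing is genuinely different and in fact sounder. The paper invokes Chebyshev's inequality to claim $\mathbb{P}\bigl(|A_{\text{RoPE}}(n)|<\epsilon\bigr)\le \frac{1/d}{\epsilon^2}$; as written this is not what Chebyshev gives (Chebyshev upper-bounds $\mathbb{P}(|A(n)|\ge\epsilon)$, not the concentration near zero), and for the suggested $\epsilon<1/\sqrt{d}$ the stated bound exceeds $1$ and is vacuous. Your route --- orthogonality of $\mathbf{R}(n)$ plus rotation invariance of the uniform law on $\mathbb{S}^{d-1}$ gives $A(n)\overset{d}{=}q^\top k$ for every $n$, hence a fixed $\delta=\mathbb{P}(|q^\top k|\ge\varepsilon)>0$ uniformly in $n$ --- is the clean anti-concentration argument the paper is gesturing at, and it also makes explicit why the mean-based clause of LDCP is unusable here. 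On GPS, you go beyond the paper: the paper's proof of this proposition never actually addresses GPS (it only treats LDCP), whereas you compute $\partial A(n)/\partial q=\mathbf{R}(n)k$, note the literal scalar form $g(n)$ in the GPS definition does not directly apply since $\mathbf{R}(n)k$ is not a scalar multiple of $k$, and repair this with a projection coefficient or the operator form; this fills a real gap in the paper's exposition. One minor slip: you set $\theta_m=\mathrm{base}^{-2m/d}\,n$ and then write $\cos(\theta_m n)$, which double-counts $n$; either take $\theta_m=\mathrm{base}^{-2m/d}$ and angle $\theta_m n$, or keep the paper's convention and write $\cos(\theta_m)$.
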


\begin{proposition}
For any query and key vector with bounded norms, ALiBi is able to achieve convergent normalization and entropy boundedness. \hfill (Proof -\ref{ref:proof_ALiBi_Advantage})
\end{proposition}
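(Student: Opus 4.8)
The plan is to use the fully explicit form of the ALiBi score to reduce Convergent Normalization to summability of a geometric series, and then obtain Entropy Boundedness essentially for free from Theorem~\ref{thm:entropy_extrapolation}.

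First I would instantiate \eqref{eq:GPE} for ALiBi: with $\mathbf{W}(n)=\mathbf{I}$, $f(n)=1$, $b(n)=-m|n|$ one gets $A(n)=q_i^\top k_j - m|n|$. Since the query and key vectors have bounded norms, say $\|q_i\|\le R_q$ and $\|k_j\|\le R_k$ for every position, Cauchy--Schwarz gives the uniform two-sided bound $|q_i^\top k_j| \le R_q R_k =: B$, hence $A(n)\le B-m|n|$ pointwise (and likewise $A(n)\ge -B-m|n|$).

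Next I would bound the normalization sum by a geometric majorant. For the one-sided denominator in the Convergent Normalization definition,
$$Z=\sum_{n=0}^{L} e^{A(n)} \le e^{B}\sum_{n=0}^{L} e^{-mn} \le \frac{e^{B}}{1-e^{-m}},$$
which is finite precisely because $m>0$; letting $L\to\infty$ shows $\lim_{L\to\infty}Z<\infty$. The same estimate applied to the two-sided index set gives $\sum_{n\in\mathbb{Z}} e^{A(n)} \le e^{B}\bigl(1+\tfrac{2e^{-m}}{1-e^{-m}}\bigr)<\infty$, so the attention weights $p(n)$ appearing in the Entropy Boundedness definition are also well defined.

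Finally, Entropy Boundedness follows immediately from Theorem~\ref{thm:entropy_extrapolation}(i), since we have just shown $Z<\infty$. If a self-contained argument is preferred, use the identity $H=\log Z-\sum_n p(n)A(n)$ together with $A(n)\ge -B-m|n|$ and the fact that $\sum_n p(n)|n| \le \tfrac{e^B}{Z}\sum_n |n| e^{-m|n|}$ is finite (again by the geometric decay), giving $H\le \log Z + B + m\sum_n p(n)|n|<\infty$. I do not expect a genuine obstacle here; the only points requiring a little care are that the bound $|q_i^\top k_j|\le B$ must hold uniformly over all positions so that the geometric majorant is legitimate, and that the one-sided sum in the Convergent Normalization definition should be reconciled with the two-sided sum appearing in Entropy Boundedness — both handled above.
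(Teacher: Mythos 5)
Your proposal is correct and follows essentially the same route as the paper: bound $e^{q_i^\top k_j}$ by a constant using the bounded norms, dominate the normalization sum by the geometric series $\sum_n e^{-m|n|} $, and deduce entropy boundedness from Theorem~\ref{thm:entropy_extrapolation}(i). Your version is merely a bit more explicit (the Cauchy--Schwarz constant $B$, the one-sided versus two-sided sums, and the optional direct entropy estimate), but the underlying argument is the same.
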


In order to balance the limitations and advantages of both approaches, we propose Adaptive Positional Encoding (APE) in the next section.

\section{Proposed Methodology}
\begin{definition}[Adaptive Positional Encoding (APE)]
\begin{equation}\label{eq:APE}
A_{APE}(n) = \underbrace{\text{temp}(n) \cdot \Bigl( q_i^\top \mathbf{R}_{\alpha(n)}(n) \, k_j \Bigr)}_{\text{Adaptive Multiplicative}} + \underbrace{b(n)}_{\text{Adaptive Additive}},
\end{equation}
where the components are specified as follows:
\begin{enumerate}
    \item \textbf{Frequency Adaptation:} The rotation matrix is given by 
    $
    \mathbf{R}_{\alpha(n)}(n) = \mathbf{R}\bigl(\theta(n)\bigr),
    $
    with an adaptive rotation angle 
    $
    \theta(n) = {n}/{\alpha(n)}, \quad\text{where} \quad \alpha(n) \propto \text{attention\_entropy}(n).
    $
    That is, for positions where the attention distribution is less peaked, a larger \(\alpha(n)\) is used to achieve a smoother (more gradual) decay.
    
    \item \textbf{Temperature Scheduling:} We set
    $
    \text{temp}(n) = 1/({1+\lambda|n|}), \quad \lambda>0,
    $
    which dampens scores for large \( |n| \).
    
    \item \textbf{Learnable Additive Bias:} We now define the bias term as
    $
    b(n) = -\delta|n| - \beta \log(1+|n|) - \gamma \sqrt{|n|},
    $
    with parameters \(\delta,\beta,\gamma>0\). This combination of linear, logarithmic, and square-root penalties is designed to penalize large distances sublinearly while still ensuring a sufficient decay.
\end{enumerate}
\end{definition}

\begin{proposition}
For any query and key vector with bounded norms, APE is able to achieve convergent normalization, entropy boundedness and gradient positional sensitivity.  \hfill (Proof - \ref{ref:proof_APE_advantage})
\end{proposition}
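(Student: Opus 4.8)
The plan is to reduce the statement to results already in the excerpt by exploiting two structural facts: the rotation matrix $\mathbf{R}_{\alpha(n)}(n)$ is orthogonal for every value of $\alpha(n)$, and the bias $b(n)$ decays at least linearly. First I would establish a \emph{deterministic} pointwise upper bound on $A_{APE}(n)$. Since $\mathbf{R}_{\alpha(n)}(n)$ is a rotation, Cauchy--Schwarz gives $\bigl|q_i^\top \mathbf{R}_{\alpha(n)}(n)\,k_j\bigr| \le \|q_i\|\,\|\mathbf{R}_{\alpha(n)}(n)\,k_j\| = \|q_i\|\,\|k_j\| \le B^2$, where $B$ bounds the query/key norms. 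Because $\text{temp}(n) = 1/(1+\lambda|n|) \in (0,1]$, the multiplicative term is bounded in absolute value by $B^2$, so
\[
A_{APE}(n) \;\le\; B^2 + b(n) \;=\; B^2 - \delta|n| - \beta\log(1+|n|) - \gamma\sqrt{|n|}.
\]
This estimate holds uniformly over all choices of $\alpha(n)$, so the entropy-driven frequency adaptation never jeopardizes it.

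Second, I would exponentiate and sum. From the bound above, $e^{A_{APE}(n)} \le e^{B^2}\, e^{-\delta|n|}\,(1+|n|)^{-\beta}\, e^{-\gamma\sqrt{|n|}} \le e^{B^2}\,e^{-\delta|n|}$, hence $Z = \sum_{n=0}^{L} e^{A_{APE}(n)} \le e^{B^2}\sum_{n=0}^{\infty} e^{-\delta n} = e^{B^2}/(1-e^{-\delta}) < \infty$ uniformly in $L$, and the identical geometric comparison controls the two-sided sum $\sum_{n\in\mathbb{Z}}$. Therefore APE satisfies convergent normalization. (The linear penalty $-\delta|n|$ alone already forces convergence; the logarithmic and square-root terms only sharpen the decay.) Entropy boundedness is then immediate from Theorem~\ref{thm:entropy_extrapolation}(i): since $Z<\infty$ we get $H<\infty$. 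Consistently with Lemma~\ref{thm:ldcp_div}, this also shows APE must forgo LDCP, which is the intended trade-off.

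Third, for gradient positional sensitivity I would differentiate $A_{APE}(n)$ with respect to the query vector $q_i$. The additive bias $b(n)$ is independent of $q_i$, so
\[
\frac{\partial A_{APE}(n)}{\partial q_i} \;=\; \text{temp}(n)\,\mathbf{R}_{\alpha(n)}(n)\,k_j \;=\; \frac{1}{1+\lambda|n|}\,\bigl(\mathbf{R}_{\alpha(n)}(n)\,k_j\bigr),
\]
which matches the GPS template with the explicitly $n$-dependent, non-constant gain $g(n) = \text{temp}(n) = 1/(1+\lambda|n|)$ (together with an additional directional dependence entering through the rotation angle $\theta(n) = n/\alpha(n)$, exactly as in the RoPE case). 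Since $g$ is non-constant in $n$, GPS holds.

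The step I expect to be the main obstacle is the GPS claim, specifically matching the rigid algebraic form $\partial A(n)/\partial q = g(n)\,\partial(q^\top k)/\partial q$ in Definition~(GPS): with the rotation inside the bilinear form the true gradient is $\text{temp}(n)\,\mathbf{R}_{\alpha(n)}(n)k_j$ rather than a pure scalar multiple of $k_j$, so one must either adopt the same reading of the GPS definition used for RoPE (treating the rotated key as the reference direction) or argue that the scalar temperature factor alone already supplies the required non-constant positional gain. Everything else is a routine comparison-test/monotonicity argument plus the appeal to Theorem~\ref{thm:entropy_extrapolation}.
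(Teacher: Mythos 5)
Your proposal is correct and follows essentially the same route as the paper: bound the multiplicative term via orthogonality of the rotation and boundedness of $q,k$, let the exponential decay of $b(n)$ drive a comparison with a convergent (geometric) series for $Z$, invoke Theorem~\ref{thm:entropy_extrapolation} for entropy boundedness, and obtain GPS from the $n$-dependence supplied by $\text{temp}(n)$ and the rotation. If anything, your execution is slightly more careful than the paper's: the paper's displayed bound $e^{A(n)} \le \frac{C_0}{1+\lambda|n|}\, e^{b(n)}$ leaves the bound on the bilinear term outside the exponential whereas you exponentiate correctly, you rightly note that $\delta>0$ alone suffices (the paper asks for $\delta>0$ and $\gamma>0$), and your explicit gradient computation with the caveat about the rigid form of the GPS definition is more precise than the paper's one-line appeal to ``a similar argument as in RoPE.''
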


A noteworthy point is that, since bias $b(n)$ term of APE decays slower (due to the additional logarithmic and square-root terms) than the linear decay in ALiBi, the resulting attention distribution
$
p(n)={e^{A(n)}}/{Z}
$ has a heavier tail. This implies a larger expected value $\mathbb{E}_{p}[|n|]$ and, by the formula $
H = \log Z + \mathbb{E}_{p}[-A(n)],
$, this leads to a higher entropy relative to ALiBi, preserving useful long-range interactions. However, on the contrary, due to similar reasons as in ALiBi, APE as well fails to achieve global LDCP. We can however, show that APE performas better by relaxing the condition as follows,

\begin{definition}[Local LDCP Range]
The \emph{local LDCP range} \( N_{\text{LDCP}} \) is the largest integer such that for all \( |n| \leq N_{\text{LDCP}} \), the expected attention score satisfies
$
\left|\mathbb{E}[A(n)]\right| \geq C_1 > 0,
$
for a fixed threshold \( C_1 \).
\end{definition}

\begin{lemma}
$ N_{\text{LDCP}}^{\text{APE}} \geq N_{\text{LDCP}}^{\text{ALiBi}}$ for equivalent \( \delta = m \), due to sublinear terms under assumption \ref{ref:assumption}.
\end{lemma}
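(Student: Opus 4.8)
The plan is to reduce the statement to a comparison between the two methods' \emph{positional decay profiles}: under Assumption~\ref{ref:assumption} the multiplicative (content) terms of both ALiBi and APE contribute nothing in expectation, so \(\mathbb{E}[A(n)]\) is governed entirely by the additive bias, and the claim becomes an inequality between \(-m|n|\) and \(b(n)=-\delta|n|-\beta\log(1+|n|)-\gamma\sqrt{|n|}\) at matched linear rate \(\delta=m\). The key sub-claim is that APE's expected score dominates ALiBi's in magnitude at every position, so the local-LDCP inequality is inherited on a superset of positions and hence on a window at least as large.

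First I would evaluate \(\mathbb{E}[A(n)]\) for both methods. For APE the multiplicative part is \(\text{temp}(n)\cdot\bigl(q_i^\top \mathbf{R}_{\alpha(n)}(n)\,k_j\bigr)\); since \(\mathbf{R}_{\alpha(n)}(n)\) and \(\text{temp}(n)\) are deterministic and \(q_i,k_j\) are independent with zero mean on the sphere, \(\mathbb{E}\bigl[\text{temp}(n)\,q_i^\top\mathbf{R}_{\alpha(n)}(n)k_j\bigr]=\text{temp}(n)\operatorname{tr}\!\bigl(\mathbf{R}_{\alpha(n)}(n)\,\mathbb{E}[k_j]\mathbb{E}[q_i]^\top\bigr)=0\), and the same holds for ALiBi with \(\mathbf{R}=\mathbf{I}\), \(\text{temp}\equiv1\). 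Hence
\[
\mathbb{E}[A_{\text{ALiBi}}(n)]=-m|n|,\qquad
\mathbb{E}[A_{\text{APE}}(n)]=b(n)=-\delta|n|-\beta\log(1+|n|)-\gamma\sqrt{|n|}.
\]
Setting \(\delta=m\), for every \(n\) one then has \(\bigl|\mathbb{E}[A_{\text{APE}}(n)]\bigr|=m|n|+\beta\log(1+|n|)+\gamma\sqrt{|n|}\ \ge\ m|n|=\bigl|\mathbb{E}[A_{\text{ALiBi}}(n)]\bigr|\), strictly for \(n\neq0\) since \(\beta,\gamma>0\). Thus every position satisfying \(|\mathbb{E}[A(n)]|\ge C_1\) for ALiBi also satisfies it for APE, so the largest contiguous window \(|n|\le N\) on which the bound holds throughout cannot shrink, giving \(N_{\text{LDCP}}^{\text{APE}}\ge N_{\text{LDCP}}^{\text{ALiBi}}\). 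If one reads the ``with high probability'' clause of LDCP instead, I would additionally record that \(\operatorname{Var}[A_{\text{APE}}(n)]=\text{temp}(n)^2\operatorname{Var}[q_i^\top\mathbf{R}k_j]=\tfrac{1}{d(1+\lambda|n|)^2}\le\tfrac1d=\operatorname{Var}[A_{\text{ALiBi}}(n)]\) by rotational invariance of the uniform sphere measure, so \(A_{\text{APE}}(n)\) concentrates at least as tightly about a mean of at least as large magnitude and the same window comparison goes through.

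The main obstacle I anticipate is not the algebra but pinning down the exact form of the local-LDCP condition so the pieces fit cleanly: one must decide whether ``preservation'' is measured on \(\mathbb{E}[A(n)]\), on \(|\mathbb{E}[A(n)]|\), or on a high-probability event, and must handle the degenerate point \(n=0\), where every decay term — and the zero-mean multiplicative term — vanishes, e.g.\ by taking the window over \(1\le|n|\le N\) or, equivalently, by comparing the purely positional parts of the scores directly. Once that convention is fixed, the monotone-dominance step above — driven exactly by the sublinear terms \(\beta\log(1+|n|)\) and \(\gamma\sqrt{|n|}\) — yields the inequality, and I would close by checking consistency with the earlier heavy-tail remark: relative to a pure-linear ALiBi penalty the same sublinear terms are \(o(|n|)\), which is the regime in which APE's attention distribution acquires its heavier tail.
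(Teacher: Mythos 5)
Your proposal is, at its core, the same comparison the paper makes: evaluate \(\mathbb{E}[A(n)]\) under Assumption~\ref{ref:assumption} so that only the additive bias survives, set \(\delta=m\), and use the pointwise dominance \(m|n|+\beta\log(1+|n|)+\gamma\sqrt{|n|}\ \ge\ m|n|\) coming from the sublinear terms. The difference is that you execute it coherently where the paper does not. The paper's proof solves \(m|n|\ge C_1\) yet reports the answer as \(|n|\le C_1/m\), then silently switches the APE condition to \(|\mathbb{E}[A_{\text{APE}}(n)]|\le C_1\), and finally asserts that the extra terms ``allow larger \(|n|\)'' --- which does not follow from its displayed chain, since a pointwise larger left-hand side satisfies an upper bound on a \emph{smaller} set of positions. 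You instead keep the condition in the form given in the Local LDCP Range definition, \(|\mathbb{E}[A(n)]|\ge C_1\), under which the window-inclusion argument is valid, you justify \(\mathbb{E}[q^\top\mathbf{R}(n)k]=0\) explicitly, you isolate the degenerate point \(n=0\), and your variance remark reproduces what the paper defers to Lemma~\ref{thm:ape_var}. In that sense your write-up is a corrected version of the paper's argument rather than a different route.

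The caveat you flag as an ``obstacle'' is, however, more than a convention issue, and you should state its consequence explicitly. Under the literal definition both methods have \(\mathbb{E}[A(n)]\le 0\) with magnitude growing in \(|n|\), so the condition \(|\mathbb{E}[A(n)]|\ge C_1\) fails near \(n=0\) and holds for all large \(|n|\); the ``largest window \(|n|\le N\)'' is then degenerate (essentially zero, or infinite when \(C_1\le m\)), and a large \(|\mathbb{E}[A(n)]|\) produced by a strongly negative bias signals suppression rather than preserved correlation. Under the reading the surrounding text appears to intend (scores not yet suppressed past threshold, i.e.\ \(|\mathbb{E}[A(n)]|\le C_1\), consistent with ``deterministically suppressed'' for \(|n|>N_{\text{LDCP}}^{\text{ALiBi}}\) in Lemma~\ref{thm:ape_var}), the same pointwise dominance reverses the conclusion: at matched \(\delta=m\), APE's bias is more negative at every \(n\neq 0\), so \(N_{\text{LDCP}}^{\text{APE}}\le N_{\text{LDCP}}^{\text{ALiBi}}\). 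So your proof establishes the lemma as literally stated, but a complete treatment should record that the inequality flips under the alternative, arguably intended, reading --- the paper's own proof straddles the two readings and is sound under neither.
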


\begin{proof}
Solving \( |\mathbb{E}[A_{ALiBi}(n)]| = m|n| \geq C_1 \), yields $
|n| \leq {C_1}/{m}
$. Thus, $N_{\text{LDCP}}^{\text{ALiBi}} = \lfloor {C_1}/{m} \rfloor.$ For, $N_{\text{LDCP}}^{\text{ALiBi}}$, we need to solve \( |\mathbb{E}[A_{APE}(n)]| = \delta|n| + \beta \log(1+|n|) + \gamma\sqrt{|n|} \leq C_1 \).  
If \( \delta = m \), compare to ALiBi’s linear decay:
\[
\underbrace{m|n|}_{\text{ALiBi}} \leq \underbrace{m|n| + \beta \log(1+|n|) + \gamma\sqrt{|n|}}_{\text{APE}} \leq C_1.
\]
The inequality \( m|n| \leq C_1 \) gives \( |n| \leq {C_1}/{m} \), but APE’s additional terms allow larger \( |n| \) while still satisfying \( \mathbb{E}[A(n)] \geq C_1 \). Thus, \( N_{\text{LDCP}}^{\text{APE}} \geq N_{\text{LDCP}}^{\text{ALiBi}} \). 
\end{proof}
Now, as shown by the below lemma, even for values outside the local LDCP range, APE fairs better than ALiBi.

\begin{lemma}[APE’s Variance Advantage]\label{thm:ape_var}
For \( |n| > N_{\text{LDCP}}^{\text{ALiBi}} \), ALiBi’s attention scores are deterministically suppressed, while APE’s scores satisfy:
\[
\mathbb{P}\left(A_{\text{APE}}(n) \geq C_1\right) \geq \frac{\text{Var}[A_{\text{APE}}(n)]}{\left(\mathbb{E}[A_{\text{APE}}(n)] + C_1\right)^2 + \text{Var}[A_{\text{APE}}(n)]}.
\]
\end{lemma}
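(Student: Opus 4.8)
The plan is to read the APE half of this statement as a one-sided concentration (Cantelli / one-sided Chebyshev) estimate for the scalar $X := A_{\text{APE}}(n)$ at a fixed relative position $n$, and to dispatch the ALiBi half directly. For ALiBi, $A_{\text{ALiBi}}(n) = q_i^\top k_j - m|n|$; once $|n| > N_{\text{LDCP}}^{\text{ALiBi}} = \lfloor C_1/m\rfloor$ the deterministic bias $-m|n|$ already lies below $-C_1$, and since $q_i^\top k_j$ is bounded (unit vectors) with $n$-independent variance $1/d$ by Assumption~\ref{ref:assumption}, the bias dominates the bounded stochastic term and, for $|n|$ slightly beyond $N_{\text{LDCP}}^{\text{ALiBi}}$, pins the score strictly below $C_1$ almost surely, so $\mathbb{P}(A_{\text{ALiBi}}(n) \ge C_1) = 0$ — the claimed deterministic suppression.

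For APE the first real step is to compute the two moments of $A_{\text{APE}}(n) = \text{temp}(n)\,(q_i^\top \mathbf{R}_{\alpha(n)}(n)\,k_j) + b(n)$. Since $\mathbf{R}_{\alpha(n)}(n)$ is orthogonal, $\mathbf{R}_{\alpha(n)}(n)k_j$ is again uniform on $\mathbb{S}^{d-1}$ and independent of $q_i$, so Assumption~\ref{ref:assumption} gives $\mathbb{E}[q_i^\top \mathbf{R}_{\alpha(n)}(n)k_j]=0$ and $\text{Var}[q_i^\top \mathbf{R}_{\alpha(n)}(n)k_j]=1/d$; as $\text{temp}(n)$ and $b(n)$ are deterministic this yields $\mathbb{E}[A_{\text{APE}}(n)] = b(n)$ and $\text{Var}[A_{\text{APE}}(n)] = \text{temp}(n)^2/d = \bigl(d(1+\lambda|n|)^2\bigr)^{-1}$. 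The point to stress — the genuine contrast with ALiBi — is that this variance decays only polynomially and stays strictly positive at every finite $n$, so APE retains real dispersion around its suppressed mean. The second step then plugs $\mu := \mathbb{E}[A_{\text{APE}}(n)]$ and $\sigma^2 := \text{Var}[A_{\text{APE}}(n)]$ into a one-sided tail bound and reads off the displayed inequality with threshold $c = C_1$.

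I expect this second step to be the main obstacle, essentially for sign reasons. Standard Cantelli runs the other way — it \emph{upper}-bounds $\mathbb{P}(X \ge \mu + t)$ by $\sigma^2/(\sigma^2+t^2)$ — and the complementary lower bound $\mathbb{P}(X \ge c) \ge (\mu-c)^2/\bigl(\sigma^2 + (\mu-c)^2\bigr)$, obtained by Cauchy–Schwarz on $\mathbb{E}[(X-c)\mathbf{1}\{X>c\}]$ using $(X-c)\mathbf{1}\{X\le c\}\le 0$, is only nontrivial when $\mu \ge c$. Here $\mu = b(n) < 0 \le C_1$ for $|n|\ge 1$, so the verbatim bound is vacuous, and reconciling the stated denominator $\bigl(\mathbb{E}[A_{\text{APE}}(n)]+C_1\bigr)^2$ (rather than $\bigl(\mathbb{E}[A_{\text{APE}}(n)]-C_1\bigr)^2$) with a correctly oriented one-sided inequality is the delicate part. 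I would resolve it by either (i) proving the estimate for the ``not-suppressed'' event $\{A_{\text{APE}}(n)\ge -C_1\}$ on the regime $|n| \le N_{\text{LDCP}}^{\text{APE}}$, where $\mu = b(n) \ge -C_1$ so the reverse bound applies and yields exactly $(\mu+C_1)^2/\bigl(\sigma^2+(\mu+C_1)^2\bigr)$, matching the intended narrative that APE's scores survive where ALiBi's do not; or (ii) replacing the uniform-sphere inner product by a sub-Gaussian surrogate and proving the displayed bound up to the implied constant via a Paley–Zygmund argument after recentering $X$ by a bound on its essential infimum. Everything else reduces to routine moment algebra.
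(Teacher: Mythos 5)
Your diagnosis of the sign problem is exactly right, and you should know that the obstruction you identify is not a defect of your attempt but of the lemma and of the paper's own proof. The paper argues along precisely the route you considered and rejected: it takes \(X=A_{\text{APE}}(n)\), invokes \(\mathbb{P}(X\ge \mathbb{E}[X]+t)\le \mathrm{Var}[X]/t^{2}\) with \(t=C_1-\mathbb{E}[X]\), and then writes \(\mathbb{P}(X\ge C_1)\ge 1-\mathrm{Var}[X]/(C_1-\mathbb{E}[X])^{2}\), which inverts the direction of the inequality: Chebyshev with that choice of \(t\) gives an \emph{upper} bound on \(\mathbb{P}(X\ge C_1)\), equivalently a lower bound on \(\mathbb{P}(X< C_1)\), not on the event of interest. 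Moreover the bound the paper's proof derives does not even coincide with the lemma's displayed inequality, whose Cantelli-shaped right-hand side \(\mathrm{Var}[X]/\bigl((\mathbb{E}[X]+C_1)^{2}+\mathrm{Var}[X]\bigr)\) is again oriented as a lower bound although Cantelli only supplies upper bounds when \(\mathbb{E}[X]<C_1\). Your further observation closes the case: with unit-norm \(q,k\) one has \(A_{\text{APE}}(n)\le \mathrm{temp}(n)+b(n)\), which drops below \(C_1\) deterministically for large \(|n|\), so \(\mathbb{P}(A_{\text{APE}}(n)\ge C_1)=0\) while the displayed right-hand side remains strictly positive; the stated inequality is therefore false in exactly the regime \(|n|>N_{\text{LDCP}}^{\text{ALiBi}}\) it is asserted for, and no correct proof of it exists under the paper's assumptions. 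Your moment computations (\(\mathbb{E}[A_{\text{APE}}(n)]=b(n)\), \(\mathrm{Var}[A_{\text{APE}}(n)]=\mathrm{temp}(n)^{2}/d\) versus \(\mathrm{Var}[A_{\text{ALiBi}}(n)]=1/d\)) match the quantities the paper uses, and they are the only part of the lemma's narrative that genuinely survives.

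Be clear, though, about what your repairs deliver. Option (i) is sound mathematics (the reverse Cantelli bound \(\mathbb{P}(X\ge c)\ge(\mu-c)^{2}/(\sigma^{2}+(\mu-c)^{2})\) is valid when \(\mu\ge c\), and taking \(c=-C_1\) on the range where \(b(n)\ge -C_1\) gives \(\mathbb{P}(A_{\text{APE}}(n)\ge -C_1)\ge (b(n)+C_1)^{2}/\bigl(\sigma^{2}+(b(n)+C_1)^{2}\bigr)\)), but note that this is a statement about a different event, on a different range of \(n\), with the roles of \(\sigma^{2}\) and \((\mu+C_1)^{2}\) swapped relative to the lemma's display — it is a corrected substitute, not a proof of the statement as given. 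Similarly, the Paley--Zygmund route in (ii) changes the distributional assumptions. So the honest conclusion of your proposal should be stated as such: the lemma requires reformulation (or should be demoted to the heuristic variance comparison), rather than suggesting the displayed bound can be recovered "up to constants." One small caution on your ALiBi paragraph: suppression is almost sure only once \(m|n|>1+C_1\) (so that \(q^{\top}k-m|n|<C_1\) for all unit vectors), not immediately past \(N_{\text{LDCP}}^{\text{ALiBi}}=\lfloor C_1/m\rfloor\); your hedge "slightly beyond" covers this, but it is worth making the threshold explicit.
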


\begin{proof}
Using Chebyshev’s inequality for \( X = A_{\text{APE}}(n) \):
$
\mathbb{P}(X \geq \mathbb{E}[X] + t) \leq {\text{Var}[X]}/{t^2}.
$
Set \( t = C_1 - \mathbb{E}[X] \). For \( \mathbb{E}[X] \leq C_1 \):
\[
\mathbb{P}(X \geq C_1) \geq 1 - \frac{\text{Var}[X]}{\left(C_1 - \mathbb{E}[X] \right)^2}.
\]
\( \text{Var}[A_{\text{APE}}(n)] = {1}/{d(1+\lambda|n|)^2} \) ensures higher non-zero probability of \( A(n) \geq C_1 \) as  \( \text{Var}[A_{\text{ALiBi}}(n)] = {1}/{d} \)
\end{proof}

\section{Case Study}

In order to evaluate our approach and mathematical claims empirically, we conduct the following case study using our model, the Tiny Story Maker (TSM). The setup is in accordance with the Chinchilla Scaling laws as well, as the number of tokens for training is roughly in the order of 20 times the number of model parameters.

\subsection{Experimental Setup}

\textbf{Model Architecture:}  
TSM is a decoder-only 30M transformer similar to GPT-2 but in a scaled down version (number of heads = 6, number of layers = 6, embedding dimension = 384). The baselines RoPE and ALiBi are implemented following prior work i.e., for RoPE, the standard base of 10000 was used while for ALiBi, as recommended, the layer-wise slopes ($m$) is the geometric sequence which starts with $2^{-4/3}$ and uses the same value as the ratio. Our proposed APE method replaces the standard positional encoding module.

\textbf{Datasets:}  Owing to the proven effectiveness in training small LLMs, we use the Microsoft's TinyStories dataset \cite{eldan2023tinystories}. 
For controlled evaluations, we also create a synthetic dataset, namely LongTinyStories (LTS), in a very similar manner to how Tiny Stories has been generated (the only difference being is that at prompting instead of using the line - \emph{Write a short story (3-5 paragraphs)}, we replace it with \emph{Write a very long story (5-7 chapters)}). LTS contains stories containing a wide range of words ranging from 500 to 32,000 words. 

This was done because, each story in the validation split of Tiny Stories dataset, contains on average, around 180 words. Thus, evaluation of perplexity using this dataset, although may demonstrate the superiority of an approach in capturing short range dependencies when given very long prompts, it is not able to effectively demonstrated the medium or long-range dependencies as promised.

\begin{table}[ht]
\centering
\begin{tabular}{lccc}
\toprule
\textbf{Dataset} & \textbf{FRE} & \textbf{Gunning Fog} & \textbf{ARI} \\
\midrule
LongTinyStories & 93.06 & 3.63 & 2.64 \\
TinyStories & 105.19 & 4.83 & 0.85 \\
\bottomrule
\end{tabular}
\caption{Readability metrics across different datasets. Lower FRE and higher Gunning Fog and ARI scores is indicative of a more complex dataset. (\label{table:readability-metrics} \cite{yam2024babymodelsreadexploring}
)}
\end{table}

\textbf{Training Details:}  
All models are trained over 3 different context window sizes, namely 64, 128 and 256 tokens for 25,000 iterations using identical hyperparameters (learning rate, batch size, optimizer settings, etc.) to ensure fair comparisons, namely the values as used in nanoGPT \footnote{\href{https://github.com/karpathy/nanoGPT/blob/master/train.py}{https://github.com/karpathy/nanoGPT/blob/master/train.py}} (batch size = 12, initial learning rate = 6e-4, weight decay = 1e-1, gradient clip = 1.0, dropout = 0.0, optimizer = AdamW). To evaluate extrapolation, we test models on sequences longer than the training context, up to 16,384 tokens in multiples of 2. All experiments were conducted using an AMD EPYC 9554 CPU (128 cores), 252 GB RAM, and an NVIDIA H200 NVL GPU (140 GB), running Linux kernel 6.11 and Python 3.12.

\textbf{Metrics:}  
We report perplexity on the validation sets and further analyze the attention distributions by measuring the Shannon entropy of the softmax outputs.

\subsection{Experiments}

We evaluate the baseline and proposed approach on the perplexity metric, over three ranges, namely validation split of TinyStories featuring short-range dependency as well as on the 0-5k split and 5k-10k split of LTS which captures performance when medium-range and long-range dependencies are needed. In addition to pretraining experiments, we also demonstrate the performance enhancement obtained by using APE during fine-tuning.

\subsection{Results}

\subsubsection{Short-Range Dependency Handling}

As can be seen from Figure \ref{fig:perplexity_plots}, even though the perplexity metrics for RoPE and ALiBi begin to explode, soon after evaluation on prompts of length higher than the context window used during pretraining was encountered, APE still maintains an reasonably low perplexity value even for prompts of length 16,384 (which if, 64 context window training is considered, is then a 256-fold increase).

\begin{figure}[ht]
    \centering
    \includegraphics[width=0.32\textwidth]{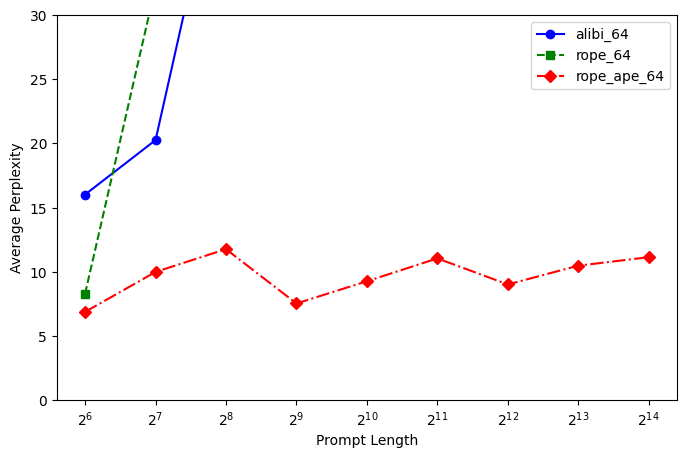}
    \includegraphics[width=0.32\textwidth]{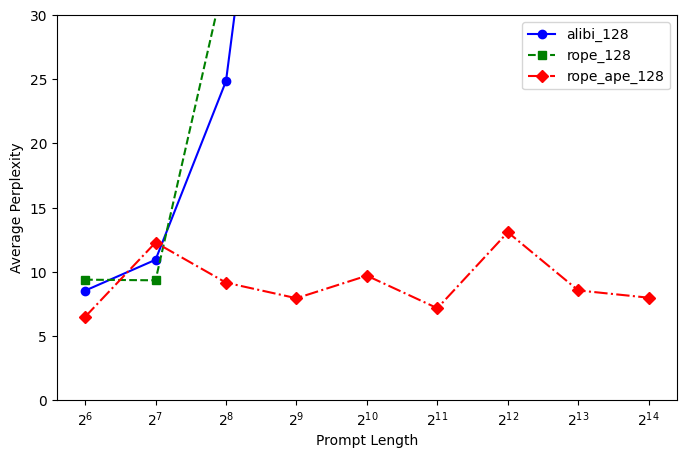}
    \includegraphics[width=0.32\textwidth]{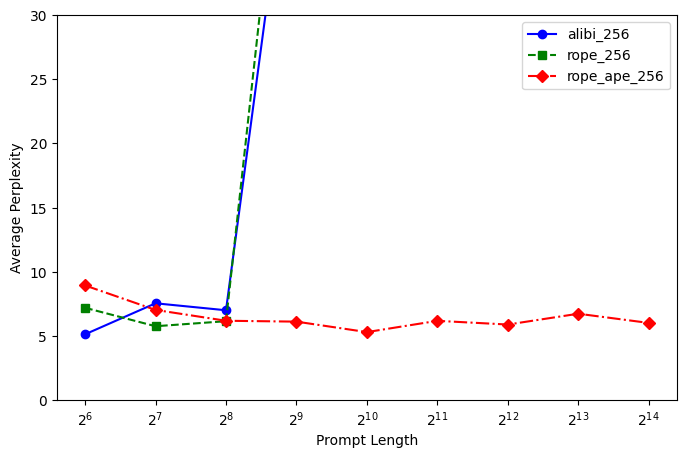}
    \caption{Plots of Perplexity on Validation set of Tiny Stories Dataset vs Prompt lengths for RoPE, ALiBi and APE trained with context windows of 64, 128 and 256.}
    \label{fig:perplexity_plots}
\end{figure}

We now analyze the internal behavior of models by examining the entropy of the attention distributions. As can be seen from Figure \ref{fig:attention_entropy}, in case of RoPE, it grows rapidly with increasing prompt length, as expected from our earlier analysis while for ALiBi, it soon starts saturating. APE on the other hand, achieves a middle ground, thereby enjoying the advantages of both the approaches.
\begin{figure}[ht]
\includegraphics[width = 0.32\textwidth]{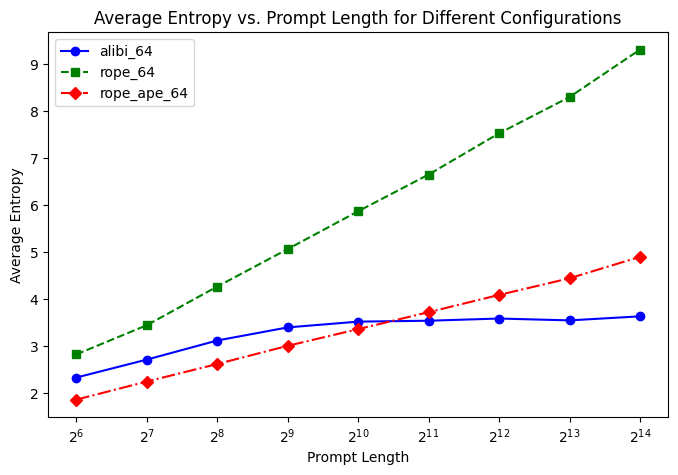}
\includegraphics[width = 0.32\textwidth]{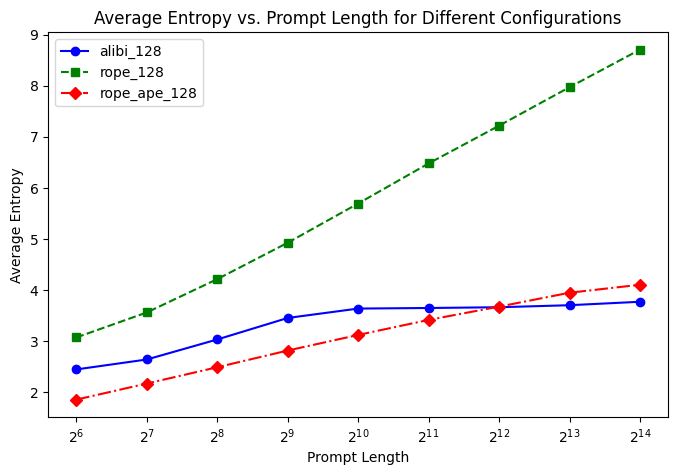}
\includegraphics[width = 0.32\textwidth]{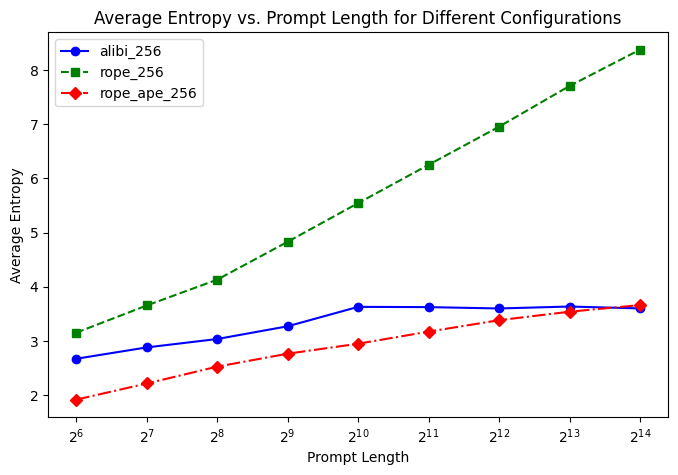} \\
\caption{Plots of Attention Entropy vs Prompt lengths for RoPE, ALiBi and APE trained with context windows of 64, 128 and 256}
\label{fig:attention_entropy}
\end{figure}

We then analyze in Figure \ref{figure:speed_memory} training speed, inference speed and memory footprints across the three approaches. As expected, due to extra learnable parameters in APE, its memory footprint is slightly higher. Moreover, ALiBi is superior when training and inference speeds are measured in terms of words per second. This might be attributed to the fact that, ALiBi simply adds a precomputed linear bias based on the relative distance in contrast to the costly trigonometric and matrix rotation computations required by RoPE and thus APE as well. 

\begin{figure}[ht]
\includegraphics[width = 0.32\textwidth]{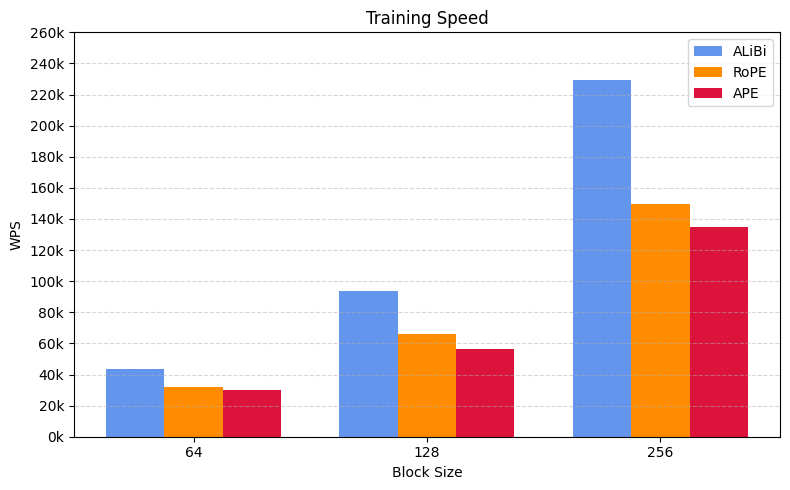}
\includegraphics[width = 0.32\textwidth]{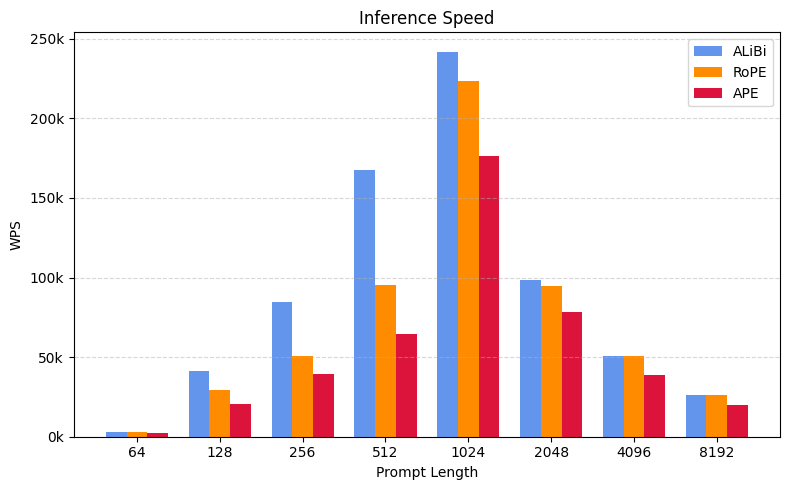}
\includegraphics[width = 0.32\textwidth]{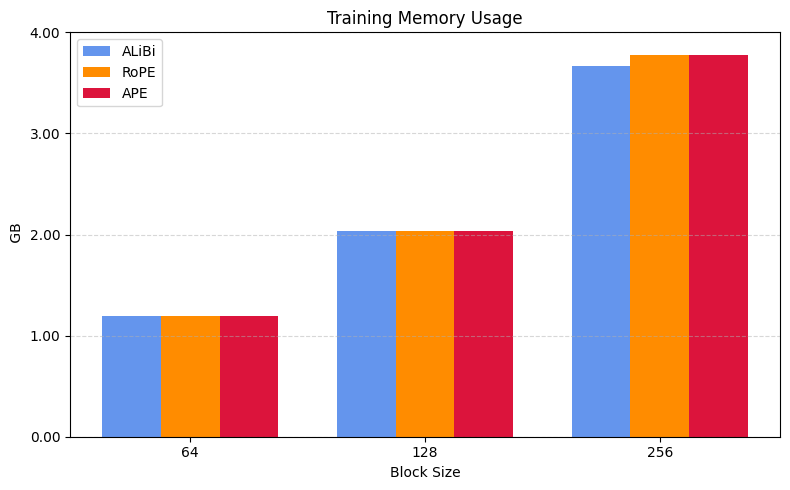} \\
\caption{A comparison of batched training, inference speed and memory usage of RoPE, ALiBi and APE.}
\label{figure:speed_memory}
\end{figure}

Despite this, APE can still be considered to memory efficient because from Figure \ref{figure:ALL}, performance of APE trained with context window of 64, is still remarkably better when compared with that of ALiBi and RoPE trained with context windows of 128 and 256. This implies that APE, by using $\sim 66 \%$ lower memory than ALiBi and RoPE, can still achieve remarkably superior performance.  

\begin{figure}[ht]
\centering
\includegraphics[width = 0.4\textwidth]{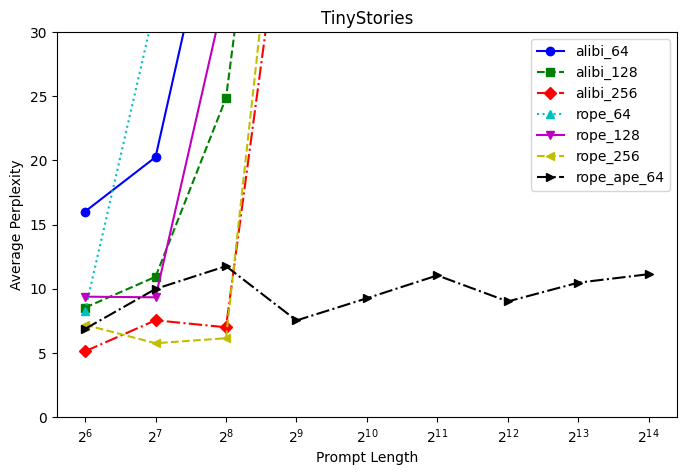}
\caption{Perplexity vs Prompt lengths for RoPE and ALiBi trained with context windows of 64, 128 and 256 compared with APE trained only with context window of 64}
\label{figure:ALL}
\end{figure}

\subsubsection{Medium-Range Dependency Handling}
We now, use the stories in the synthetic LTS dataset in the range 0 to 5000 words for this purpose. It can be seen that, while APE continues to fair consistently better than ALiBi and RoPE, an increase in perplexity is observed compared to that while testing upon on Tinystores. We believe that this can be attributed to the enhanced complexity of LTS over TinyStories, as is reflected when evaluated upon three metrics - namely,  Flesch reading ease (FRE), ARI (Automated Readability Index), and the Gunning fog index \ref{table:readability-metrics}.

\begin{figure}[ht]
\includegraphics[width = 0.32\textwidth]{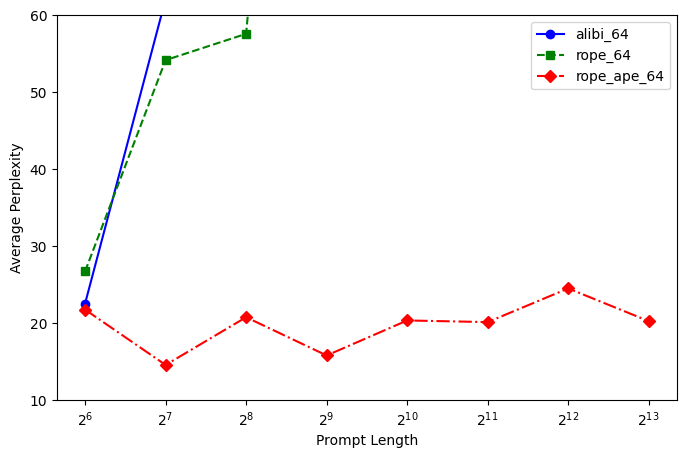}
\includegraphics[width = 0.32\textwidth]{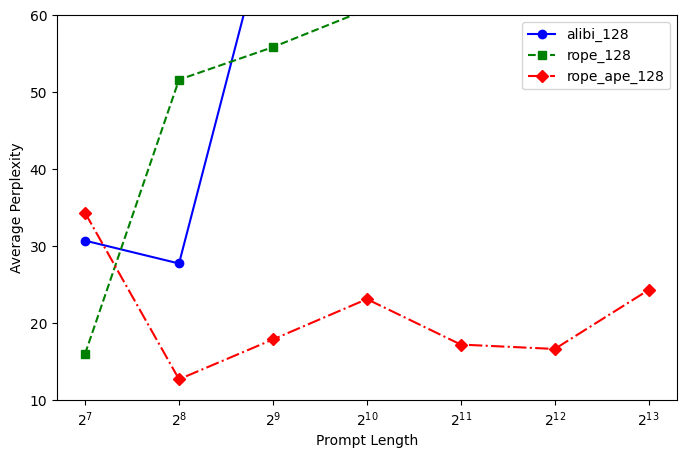}
\includegraphics[width = 0.32\textwidth]{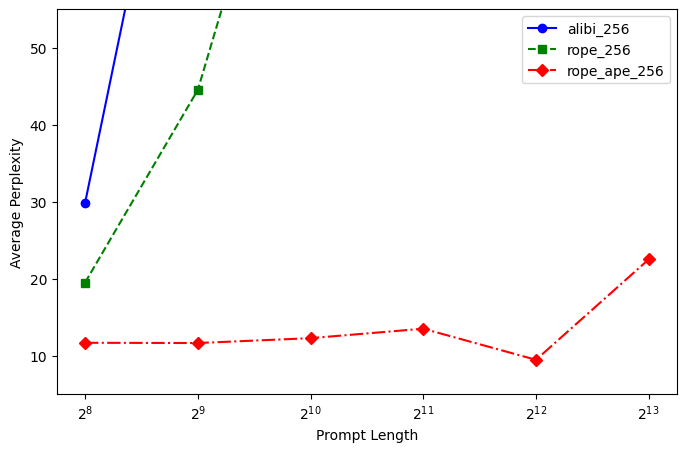} \\
\caption{Plots of Perplexity on LTS (upto 5,000 words) vs Prompt lengths for RoPE, ALiBi and APE trained with context windows of 64, 128 and 256}
\end{figure}

\subsubsection{Long-Range Dependency Handling}
 Even in the long-range scenario, as well similar trends are observed, demonstrating the superiority of APE in all regimes of prompt lengths over ALiBi and RoPE.
\begin{figure}[ht]
\includegraphics[width = 0.32\textwidth]{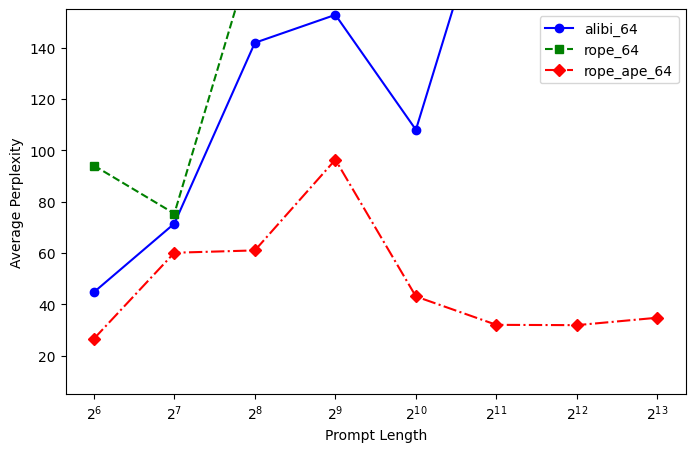}
\includegraphics[width = 0.32\textwidth]{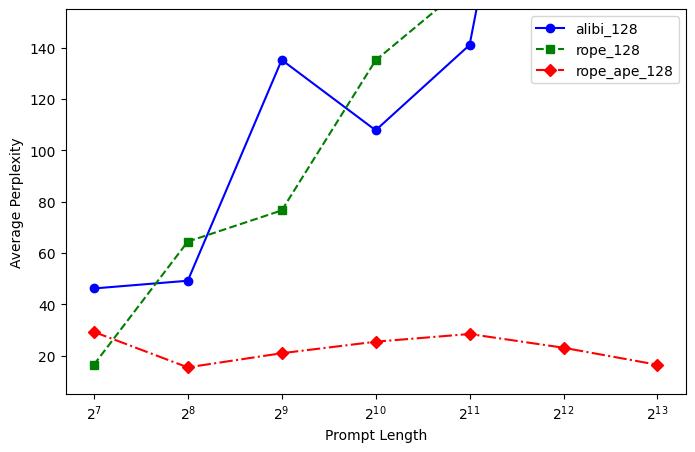}
\includegraphics[width = 0.32\textwidth]{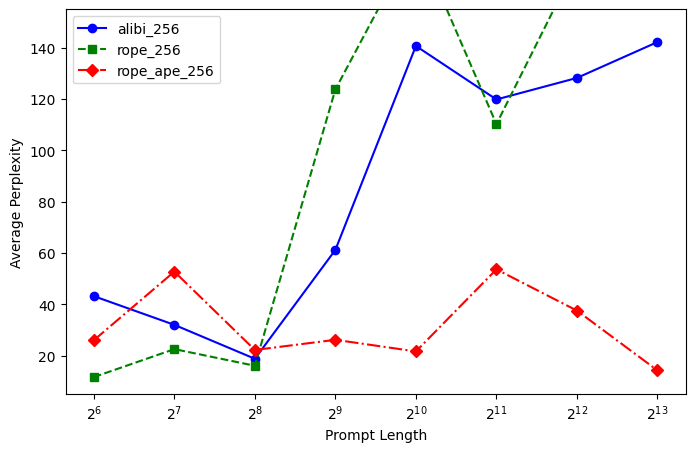} \\
\caption{Perplexity plots on LTS (from 5,000 to 10,000 words) vs Prompt lengths for RoPE, ALiBi and APE trained with context windows of 64, 128 and 256}
\end{figure}

\subsubsection{Fine-tuning}
We now, see how reliable of an approach is APE to deploy during fine-tuning. After training with RoPE and ALiBi, we finetuned the model with APE by using \textit{just 1\% of initial training corpora} and for \textit{only 500 iterations}. The resulting performance was considerably improved as seen in Figure \ref{Figure : Fine-tuning}

\begin{figure}[ht]
\centering
\includegraphics[width = 0.32\textwidth]{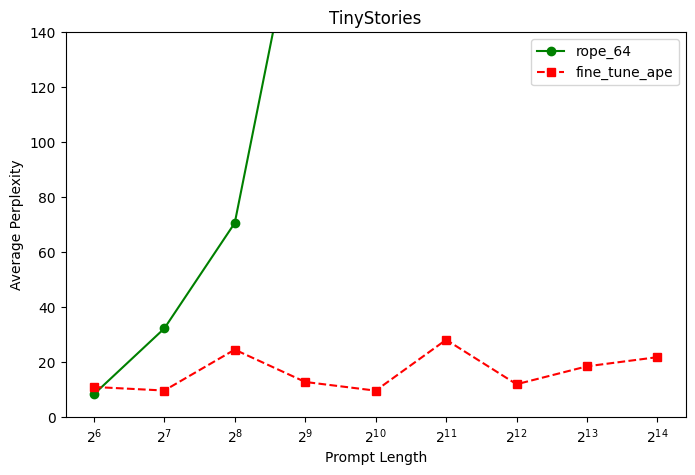}
\includegraphics[width = 0.32\textwidth]{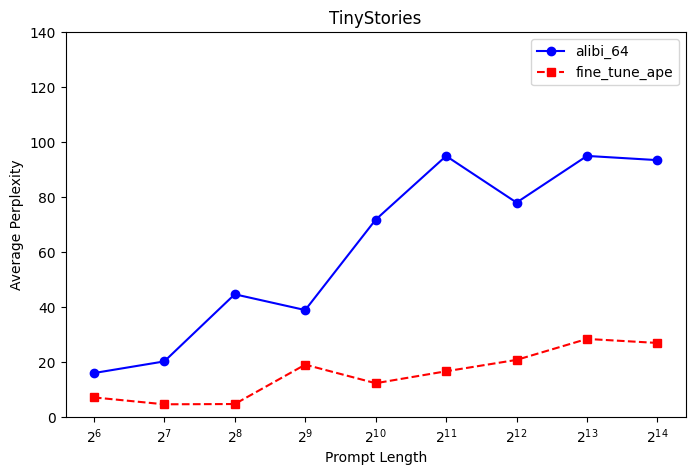} \\
\caption{Perplexity plots on TinyStories vs Prompt lengths while training on context window of 64 (a) RoPE, RoPE finetuned APE and (b) ALiBi, ALiBi finetuned with APE}
\label{Figure : Fine-tuning}
\end{figure}

\section{Limitations}
Despite the promising results, our work has several limitations. To begin with, our approach only holds in the context of positional encodings, which although has a significant impact on length generalization \cite{kazemnejad2023impact}, represents only one of the efforts in a race to achieving infinite length extrapolation. Secondly, only  four key properties - convergent normalization, entropy boundedness, long distance correlation preservation and gradient positional sensitivity - have been identified as desirable, based on formal intuition and proven mathematical results. There may exist more such properties which can lead either interesting equivalences or trade-offs, which haven't been explored in sufficient detail. Thirdly, our case study involves one such motivated approach by our framework and conducts experiments in a low-dimensional setting. Moreover, even though this particular approach has demonstrated that successful unification of two broad families of positional embeddings empirically, the trade-off between convergent normalization and LDCP is yet to be studied precisely. Also, though our model might be memory efficient,  when considered on the basis of context window, its memory footprint, throwput and inference is still lower than that of ALiBi. Lastly, the gradient positional sensitivity property might need modification so as to ensure it has some theoretical guarantee, despite empirical evidence supporting it (\cite{morita2024positional}  reports an unintuitive finding that positional encoding enhances learning of recurrent neural networks by stabilizing the gradients.  

\section{Future Research Directions}

Given that the field of exploration of positional encodings for extrapolation is still an active research area (\cite{chen2024hopenovelpositionalencoding}, \cite{hua2025fourierpositionembeddingenhancing}), we hope that our work might serve hereby as an intuitive guideline during ideation phases as to what properties should be looked out for in such embeddings. Failure to satisfy any one such property might indicate potential flaws before conducting experimental results. Secondly, although the case study is in a low-dimensional setting, but it obeys the Chinchilla scaling laws. The methods motivated by this framework may thus be able to generalize to larger models as well. Hence, scaling is one further such avenue. Lastly, since most of the widely used language modelling benchmarks such WikiText, Project Gutenberg-19 are much complex than TinyStories, testing of approaches would be entail much bigger models. However, now due to introduction of the LongTinyShort Stories dataset, length extrapolation capabilities can now be tested upon with small language models as well. 
 
\section{Conclusion}

In this paper, we introduced a unified framework for positional encoding that decomposes attention scores into adaptive multiplicative and additive components. Through theoretical analysis and extensive empirical evaluation on the Tiny Stories and Long Tiny Stories datasets, we demonstrated that our proposed Adaptive Positional Encoding (APE) outperforms existing methods in terms of infinite-context extrapolation, perplexity, and attention entropy. Despite some challenges in hyperparameter tuning and computational overhead, AEPE provides a promising direction for extending the capabilities of transformer-based language models. Future research will focus on optimizing these adaptive mechanisms, integrating our approach with scalable architectures, and exploring broader applications across diverse domains.

\newpage 

\bibliography{colm2025_conference}
\bibliographystyle{colm2025_conference}

\newpage

\appendix

\section{Proofs}

\subsection{}

\begin{theorem}[Equivalence of Entropy Boundedness and Convergent Normalization].
\begin{itemize}
  \item[(i)] If $Z<\infty$, then $H < \infty$.
  \item[(ii)] Conversely, if for every $L$ the truncated distribution
  $
  p_L(n)={e^{A(n)}}/{\sum_{k=0}^L e^{A(k)}}
  $
  has entropy uniformly bounded by $H_{\max}$ (i.e. $H_L \le H_{\max}$ for all $L$), then $Z<\infty$.
\end{itemize}
\end{theorem}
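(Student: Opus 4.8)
The plan is to route both directions through the single identity
\[
H \;=\; \log Z \;-\; \mathbb{E}_{p}[A(n)],
\]
obtained by inserting $\log p(n) = A(n) - \log Z$ into $H = -\sum_{n} p(n)\log p(n)$; applied to the truncated distribution the same algebra gives $H_L = \log Z_L - \mathbb{E}_{p_L}[A(n)]$, where $Z_L = \sum_{k=0}^{L} e^{A(k)}$ and $p_L(n) = e^{A(n)}/Z_L$. Both claims then reduce to controlling the ``energy'' term $\mathbb{E}[A(n)]$ against $\log Z$.

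\emph{Part (i).} Given $Z<\infty$, the scalar $\log Z$ is finite, so it suffices to show $\mathbb{E}_p[A(n)]$ is finite, i.e. $\sum_n e^{A(n)}|A(n)| < \infty$. First, $Z<\infty$ forces $e^{A(n)}\to 0$ and hence $A(n)\to-\infty$, so only finitely many $n$ have $A(n)\ge 0$ and these contribute finitely. For the remaining indices I would dominate $e^{A(n)}|A(n)|$ by a summable sequence: the crude estimate $e^{A(n)}|A(n)|\le e^{A(n)/2}$ (equivalently $s\le e^{s/2}$ with $s=-A(n)$) is not by itself sufficient, so the step that actually closes the argument is that for the encodings considered the tail of $A(n)$ decays at least linearly in $|n|$ (ALiBi: $A(n)=-m|n|$; APE: $A(n)\le -\delta|n| + \beta\log(1+|n|) + \gamma\sqrt{|n|}$), whence $e^{A(n)}|A(n)|$ is bounded by a geometrically decaying sequence and the series converges. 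Substituting back, $H = \log Z - \mathbb{E}_p[A(n)] < \infty$.

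\emph{Part (ii).} Argue by contradiction: suppose $Z = \lim_{L\to\infty} Z_L = \infty$. In the GPE decomposition \eqref{eq:GPE}, $A(n)$ is bounded above by a constant $A_{\max}$: indeed $f(n)\le 1$, the transformation $\mathbf{W}(n)$ is a rotation (hence an isometry) in the cases of interest so $|q_i^\top\mathbf{W}(n)k_j|\le\|q_i\|\,\|k_j\|$ is bounded under bounded query/key norms, and $b(n)\le 0$. Then $\mathbb{E}_{p_L}[A(n)]\le A_{\max}$, and the identity yields $H_L \ge \log Z_L - A_{\max}$. Since $Z_L\to\infty$, the right-hand side diverges, so $H_L$ is not uniformly bounded, contradicting $H_L\le H_{\max}$ for all $L$. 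Hence $Z<\infty$.

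The main obstacle is in part (i): there is no \emph{unconditional} bound forcing $\mathbb{E}_p[A(n)]$ to be finite — for instance $e^{A(n)}\propto 1/(n\log^2 n)$ has $Z<\infty$ yet $\sum_n e^{A(n)}|A(n)| = \infty$ — so the argument genuinely exploits the fast (at least linear) decay of $A(n)$ enjoyed by ALiBi and APE, which must be invoked as a structural input. Part (ii), by contrast, needs only that $A$ is bounded above, which is immediate from \eqref{eq:GPE} under bounded query/key norms, after which the one-line inequality $H_L \ge \log Z_L - A_{\max}$ does all the work.
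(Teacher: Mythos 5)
Your proposal is essentially sound and routes both directions through the same identity the paper uses, $H=\log Z-\mathbb{E}_p[A(n)]$, but it diverges from the paper at exactly the two delicate points, and in both cases your version is the more careful one. For part (i), the paper claims that $Z<\infty$ by itself yields a polynomial tail bound $e^{A(n)}\le C/|n|^{1+\epsilon}$ and proceeds from there; your counterexample $e^{A(n)}\propto 1/(n\log^2 n)$ shows this inference is invalid (summability does not give a power-law envelope, and indeed $H=\infty$ there), so the implication is not unconditional. You repair it by invoking the at-least-linear decay of $A(n)$ for the concrete encodings (ALiBi, APE), which is precisely the structural hypothesis the paper's argument silently assumes; making it explicit is the right call, though it does mean you prove a restricted version of the stated theorem rather than the literal claim. (A small bonus of your route: once $A(n)\le -c|n|$ eventually, $\sum e^{A(n)}|A(n)|$ converges by comparison with $\sum |n|e^{-c|n|}$, avoiding the paper's shaky intermediate step that $\sum \log n/n^{\epsilon}$ converges for all $\epsilon>0$, which is false for $\epsilon\le 1$.) For part (ii), the paper's argument is heuristic (``the distribution tends toward near-uniform, so $H_L\approx\log(2L+1)$''), which does not follow in general: if $A(n)$ is allowed to grow, e.g.\ $A(n)=n$, then $Z_L\to\infty$ while $H_L$ stays bounded, so the abstract statement needs an extra hypothesis. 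Your one-line argument $H_L\ge \log Z_L - A_{\max}$, using that $A(n)$ is bounded above under bounded query/key norms with $f(n)\le 1$ and $b(n)\le 0$, is rigorous, identifies the hypothesis that is actually needed, and is satisfied by every method considered in the paper; this is a genuine improvement over the paper's proof of (ii).
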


\begin{proof}  \label{ref:proof_entropy_equivalence}
\textbf{(i) \(\Rightarrow\) (ii):} Assume that \( Z < \infty \). Then \( p(n) = e^{A(n)}/Z \) defines a probability mass function on \( \mathbb{Z} \). To prove that \( H < \infty \), we argue as follows. Since \( Z < \infty \), for large \( |n| \) there exists an \(\epsilon > 0\) and a constant \( C > 0 \) such that
$
e^{A(n)} \le {C}/{|n|^{1+\epsilon}}.
$
This is because $\sum_{n=1}^{\infty} (1/n^{1+\epsilon})$ converges by the integral test.
Taking logarithms (for \( |n| \) sufficiently large) gives
$
A(n) \le - (1+\epsilon) \log|n| + \log C.
$
Now, the Shannon entropy is:  
$
H = -\sum_{n \in \mathbb{Z}} p(n) \log p(n) = -\sum_{n \in \mathbb{Z}} p(n) \left(A(n) - \log Z\right) = \log Z - \mathbb{E}_p[A(n)].
$ The expectation is:  
   \[
   \mathbb{E}_p[A(n)] = \sum_{n \in \mathbb{Z}} \underbrace{\frac{e^{A(n)}}{Z}}_{p(n)} A(n).
   \]  
   Substituting \( A(n) \leq -\epsilon \log |n| \), the dominant terms for large \( |n| \) are:  
   $
   e^{A(n)} A(n) \approx {-(\epsilon \log |n|)}/{|n|^{\epsilon}}.
   $  
   The series \( \sum_{n=1}^\infty {\log n}/{n^{\epsilon}} \) converges for \( \epsilon > 0 \). Hence, \( \mathbb{E}_p[A(n)] \) converges absolutely. Since \( \mathbb{E}_p[A(n)] \) and \( \log Z \) are finite, \( H = \log Z - \mathbb{E}_p[A(n)] \) is also finite.

\medskip

\textbf{(ii) \(\Rightarrow\) (i):} Assume that the truncated entropies 
$
H_L = -\sum_{n=-L}^{L} p_L(n) \log p_L(n)
$
are uniformly bounded by some constant \( H_{\max} \) for all \( L \), where 
$
p_L(n)= {e^{A(n)}}/{Z_L} \quad \text{and} \quad Z_L=\sum_{n=-L}^{L} e^{A(n)}.
$
Suppose, for contradiction, that \( Z = \lim_{L\to\infty} Z_L = \infty \). Then, for large \( L \) if the tail of \( e^{A(n)} \) does not decay sufficiently fast, the normalized distribution \( p_L(n) \) tends toward a near-uniform distribution over \( 2L+1 \) points. In that case, one would have
$
H_L \approx \log(2L+1),
$ which diverges as \( L\to\infty \). This contradicts the uniform bound \( H_L \le H_{\max} \). Therefore, \( Z \) must be finite. To summarize, the uniform entropy bound forces $E^{A(n)}$  to decay sufficiently fast to prevent $Z$
Z from diverging. This excludes distributions with heavy tails (e.g., uniform-like).

\end{proof}

\subsection{}

\begin{proposition}
    For any query and key vectors with bounded norms, RoPE violates Convergent Normalization and Entropy Boundedness.
\end{proposition}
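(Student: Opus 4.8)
The plan is to show that for RoPE, the attention score $A(n)$ does not decay as $|n|\to\infty$, so the softmax normalization series diverges, and then to invoke Theorem~\ref{thm:entropy_extrapolation}(i) in contrapositive form to conclude that entropy boundedness fails as well. First I would write out $A(n)$ explicitly for RoPE: since $f(n)=1$ and $b(n)=0$, we have $A(n)=q_i^\top \mathbf{R}(n)\,k_j$, where $\mathbf{R}(n)$ is the block-diagonal rotation matrix with angles $\theta_m = \mathrm{base}^{-2m/d}\,n$. Decomposing the inner product block-wise, $A(n)=\sum_{m=1}^{d/2}\bigl(a_m\cos(\theta_m)+c_m\sin(\theta_m)\bigr)$ for coefficients $a_m,c_m$ built from the corresponding $2$-dimensional sub-blocks of $q_i$ and $k_j$; equivalently $A(n)=\sum_m r_m\cos(\theta_m-\phi_m)$ for amplitudes $r_m$ and phases $\phi_m$ depending only on $q_i,k_j$. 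The key structural fact is that $A(n)$ is a quasi-periodic (almost periodic) function of $n$ that is bounded but does \emph{not} tend to $0$; in particular it returns infinitely often to values near its supremum.

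Next I would make the non-decay quantitative. Because $\|q_i\|,\|k_j\|$ are bounded, each $r_m$ is bounded and $|A(n)|\le \sum_m r_m =: M<\infty$ for all $n$. For the lower bound, I would argue that $A(n)$ does not vanish at infinity: either (a) observe that the lowest-frequency block $m=1$ has angle $\theta_1 = \mathrm{base}^{-2/d} n$, whose increments are a fixed irrational multiple of $2\pi$ (generically), so by equidistribution $\cos(\theta_1-\phi_1)$ is within $\tfrac12$ of $1$ for a positive-density set of $n$; or, more robustly, (b) use the mean value $\lim_{L\to\infty}\frac1L\sum_{n=0}^{L}e^{A(n)} \ge e^{\,\overline{A}}$-type estimate, or simply note $\frac1L\sum_{n=1}^L e^{A(n)} \ge e^{\frac1L\sum A(n)} \to e^{0}=1$ by Jensen since the oscillatory terms average to (near) zero. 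Either route gives a constant $c>0$ and an infinite set of indices $n$ with $e^{A(n)}\ge c$, hence $Z=\sum_{n=0}^\infty e^{A(n)}=\infty$: convergent normalization fails. (In fact this is essentially an instance of Lemma~\ref{thm:ldcp_div} once one knows RoPE satisfies LDCP, which is proved later in the paper; but I would keep the argument self-contained here.) Finally, for entropy boundedness: Theorem~\ref{thm:entropy_extrapolation}(i) states $Z<\infty \Rightarrow H<\infty$; contrapositively, I would instead argue directly that the truncated entropies $H_L$ diverge — since $e^{A(n)}$ is bounded above and below away from $0$ on a positive-density set, $p_L(n)=e^{A(n)}/Z_L$ is within a constant factor of uniform on $\{0,\dots,L\}$, so $H_L \ge \log L - O(1)\to\infty$, exactly the obstruction identified in the proof of Theorem~\ref{thm:entropy_extrapolation}(ii).

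The main obstacle I anticipate is making the lower bound on $A(n)$ fully rigorous for \emph{arbitrary} bounded $q_i,k_j$ rather than generic ones: if all amplitudes $r_m$ happen to be zero (i.e.\ $q_i=0$ or $k_j=0$) then $A(n)\equiv 0$, which actually still makes $Z$ diverge, so the degenerate case is fine; the subtler worry is whether destructive interference among the $d/2$ frequencies could force $A(n)\to-\infty$ along the sequence — but this cannot happen because $A(n)$ is bounded below by $-M$, so $e^{A(n)}\ge e^{-M}>0$ for \emph{every} $n$, which already suffices for $Z=\infty$ without any equidistribution argument at all. So the cleanest version of the proof is: $A(n)\ge -M$ for all $n$ with $M$ depending only on the (bounded) norms $\Rightarrow e^{A(n)}\ge e^{-M}$ for all $n\ge 0 \Rightarrow Z\ge \sum_{n=0}^\infty e^{-M}=\infty$, and $p_L$ is bounded below by $\tfrac{1}{(L+1)e^{2M}}$, forcing $H_L\to\infty$. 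The equidistribution discussion then becomes an optional strengthening showing the scores stay genuinely large (not just bounded below), which also connects to the LDCP property used elsewhere.
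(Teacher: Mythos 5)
Your proof is correct, and your ``cleanest version'' is actually more elementary and more robust than the paper's own argument. The paper proceeds by writing $A_{\mathrm{RoPE}}(n)=\sum_{m=1}^{d/2}\|q_i^{(m)}\|\|k_j^{(m)}\|\cos(n\theta_m-\varphi_m)$ and arguing that, barring cancellation among the oscillatory blocks, the score stays of order $\mathcal{O}(1)$ (``attention leakage''), so infinitely many terms $e^{A(n)}$ are bounded away from zero and $Z$ diverges; it then appeals to Theorem~\ref{thm:entropy_extrapolation} to conclude entropy unboundedness. You bypass the block decomposition entirely: orthogonality of $\mathbf{R}(n)$ plus bounded norms gives the two-sided bound $|A(n)|\le M$, hence $e^{A(n)}\ge e^{-M}$ for \emph{every} $n$ and $Z\ge (L+1)e^{-M}\to\infty$, with no genericity caveat about destructive interference (your observation that even the degenerate case $A(n)\equiv 0$ still forces divergence closes a small gap the paper leaves open with its ``unless all $\theta_m$ induce cancellation'' phrasing). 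For the entropy half you are also more careful than the paper: note that the contrapositive of part~(i) of Theorem~\ref{thm:entropy_extrapolation} runs in the wrong direction ($H=\infty\Rightarrow Z=\infty$), so citing it, as you first planned, would not suffice; what is really needed is the contrapositive of part~(ii), and your direct estimate $e^{-2M}/(L+1)\le p_L(n)\le e^{2M}/(L+1)$ giving $H_L\ge \log(L+1)-2M\to\infty$ supplies exactly that, in quantitative form. What the paper's longer route buys is the structural picture: the cosine expansion shows the scores are not merely bounded below but genuinely non-vanishing with high probability, which is the same computation reused later to establish that RoPE satisfies LDCP, and it makes the tension with Lemma~\ref{thm:ldcp_div} explicit; your equidistribution remarks recover that strengthening as an optional add-on.
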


\begin{proof} \label{ref:proof_RoPE_limitations}
Since \(\mathbf{R}(n)\) is an orthogonal matrix for every \(n\), it preserves the norm and the structure of the inner product. Write the vectors in a coordinate system adapted to the block-diagonal structure:
$
q_i = \bigl(q_i^{(1)}, q_i^{(2)},\dots,q_i^{(d/2)}\bigr), \quad k_j = \bigl(k_j^{(1)}, k_j^{(2)},\dots,k_j^{(d/2)}\bigr),
$
with each \(q_i^{(m)}, k_j^{(m)} \in \mathbb{R}^2\). In each block, the rotation \(\mathbf{R}^{(m)}(n)\) rotates \(k_j^{(m)}\) by an angle \(n\theta_m\), where $\mathbf{R}(n) = \text{diag}(\mathbf{R}^{(1)}(n), \mathbf{R}^{(2)}(n), \dots, \mathbf{R}^{(d/2)}(n))$. Hence,
\[
q_i^{(m)\top} \mathbf{R}^{(m)}(n) k_j^{(m)} = \|q_i^{(m)}\|\|k_j^{(m)}\|\cos\Bigl(n\theta_m - \varphi_m\Bigr),
\]
where \(\varphi_m\) is the phase difference between \(q_i^{(m)}\) and \(k_j^{(m)}\). Summing over all blocks gives
\[
g(n) = \sum_{m=1}^{d/2} \|q_i^{(m)}\|\|k_j^{(m)}\|\cos\Bigl(n\theta_m - \varphi_m\Bigr).
\]
Each cosine term is periodic and oscillatory, and unless all \(\theta_m\) are chosen to induce cancellation, the sum does not vanish as \(|n|\to \infty\). In particular, since no decay factor appears in \(g(n)\), the magnitude \(|q_i^\top \mathbf{R}(n) k_j|\) remains on the order of \(\mathcal{O}(1)\) for all \(n\). In other words, as \(n\) increases, the cosine factors oscillate but do not decay:
$
\bigl|q_i^\top \mathbf{R}(n) k_j\bigr| \sim \mathcal{O}(1).
$. This is precisely the phenomenon termed \emph{attention leakage}. Thus, the modified attention score \(A_{RoPE}(n) = q_i^\top \mathbf{R}(n) k_j\) remains bounded (and oscillatory) rather than decaying as \(|n|\) grows. Because there is no decay in \(A_{RoPE}(n)\), the exponential term \(e^{A_{RoPE}(n)}\) does not vanish for large \(|n|\). More formally, there exists some constant \(c>0\) such that for infinitely many \(n\), $
e^{A(n)} \ge e^{-c} > 0.
$. Therefore, the normalization constant
$Z = \sum_{n\in\mathbb{Z}} e^{A_{RoPE}(n)}
$ contains infinitely many terms that are bounded away from zero. This sum necessarily diverges: $
\lim_{L\to\infty} \sum_{n=0}^{L} e^{A_{RoPE}(n)} = \infty,
$ which violates the property of \emph{Convergent Normalization}. By the equivalence established in Theorem~\ref{thm:entropy_extrapolation}, the divergence of \(Z\) implies that the Shannon entropy of the attention distribution is unbounded. Hence, RoPE also violates \emph{Entropy Boundedness}.
\end{proof}

\subsection{}

\begin{proposition}
ALiBi fails to satisfy LDCP under assumption \ref{ref:assumption} and GPS.
\end{proposition}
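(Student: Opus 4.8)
The statement bundles two independent claims, and I would prove them separately. Throughout, recall that ALiBi instantiates the GPE framework~(\ref{eq:GPE}) with $\mathbf{W}(n)=\mathbf{I}$, $f(n)=1$, and $b(n)=-m|n|$, so $A_{\mathrm{ALiBi}}(n) = q_i^\top k_j - m|n|$ with $m>0$. The GPS part is then immediate: the penalty $-m|n|$ does not depend on $q$, so $\partial A_{\mathrm{ALiBi}}(n)/\partial q = k = \partial(q^\top k)/\partial q$. Matching this against the definition of GPS forces the multiplier to be $g(n)\equiv 1$, which is constant in $n$; since the definition demands a \emph{non-constant} $g(n)$, no such $g$ exists and ALiBi cannot exhibit GPS. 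I would dispatch this in two lines.

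For the failure of LDCP the cleanest route is to invoke Lemma~\ref{thm:ldcp_div}, which shows LDCP and convergent normalization are mutually exclusive; since ALiBi \emph{does} achieve convergent normalization — $e^{A_{\mathrm{ALiBi}}(n)} = e^{q_i^\top k_j} e^{-m|n|}$ is bounded above by a convergent geometric series $e\sum_n e^{-m|n|}<\infty$, which also gives entropy boundedness through Theorem~\ref{thm:entropy_extrapolation} — it follows by contraposition that ALiBi fails LDCP. To expose the mechanism I would additionally compute, under Assumption~\ref{ref:assumption}, that $\mathbb{E}[A_{\mathrm{ALiBi}}(n)] = \mathbb{E}[q_i^\top k_j] - m|n| = -m|n|$: the expected score is purely positional and retains no trace of the content correlation $q_i^\top k_j$ (which has mean $0$), while the normalized weight $p(n) = e^{q_i^\top k_j - m|n|}/Z$ of a distant token tends to $0$ almost surely because $|q_i^\top k_j|\le 1$ is bounded, $m|n|\to\infty$, and $Z<\infty$. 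Hence no infinite set $S$ can carry non-vanishing (effective) attention, and the ``with high probability the absolute attention score does not vanish'' clause fails as well.

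The main obstacle is interpretive rather than computational. Read at face value, the first clause of the LDCP definition ($|\mathbb{E}[A(n)]|\ge C$ on an infinite $S$) is actually \emph{satisfied} by ALiBi, since $|\mathbb{E}[A_{\mathrm{ALiBi}}(n)]| = m|n|\to\infty$; so the proof must pivot to the intended content of LDCP — that a genuine query–key correlation, not an unbounded deterministic decay, should persist at large $|n|$ — which is exactly what the contrapositive of Lemma~\ref{thm:ldcp_div} together with the vanishing of $p(n)$ make precise. The only technical care required is to ensure that the appeal to ALiBi's convergent normalization is established independently (it is, via a bare geometric-series estimate), so that no circular dependence with that proof is introduced.
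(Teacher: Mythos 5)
Your proof is correct, and the GPS half is essentially the paper's own argument: the bias $-m|n|$ is $q$-independent, so $\partial A_{\mathrm{ALiBi}}(n)/\partial q = k$ and the only admissible multiplier is $g(n)\equiv 1$, which is constant. For the LDCP half, however, you take a genuinely different route. The paper argues directly under Assumption~\ref{ref:assumption}: it computes $\mathbb{E}[A_{\mathrm{ALiBi}}(n)] = -m|n|$, which decreases without bound, and concludes that long-range contributions are suppressed. You instead establish ALiBi's convergent normalization by a bare geometric-series bound (independently of the later proposition, so no circularity) and then invoke the contrapositive of Lemma~\ref{thm:ldcp_div}. Both routes are legitimate, and each buys something: the paper's computation exposes the mechanism (deterministic linear suppression of the expected score), while your argument is shorter and reuses the framework's structural tradeoff, making clear that the failure of LDCP is forced by convergent normalization rather than being an accident of the linear bias. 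Your interpretive remark is also well taken and worth keeping: read literally, the clause $|\mathbb{E}[A(n)]|\ge C$ is satisfied by ALiBi since $m|n|\to\infty$, so both your argument and the paper's implicitly use the intended reading in which LDCP demands persistence of genuine (positive) correlation rather than an unboundedly negative deterministic score --- indeed the proof of Lemma~\ref{thm:ldcp_div} itself silently replaces $|\mathbb{E}[A(n)]|\ge C$ by $e^{\mathbb{E}[A(n)]}\ge e^{C}$, which only holds under that reading. Your supplementary computation that $p(n)\to 0$ for distant tokens (since $|q^\top k|\le 1$, $m|n|\to\infty$, $Z<\infty$) usefully substantiates the ``more generally'' clause of the definition, which the paper's proof does not address explicitly.
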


\begin{proof} \label{ref:proof_ALiBi_limitation}
For ALiBi,
$
\mathbb{E}[A_{\text{ALiBi}}(n)]=\mathbb{E}[q^\top k]-m|n|=-m|n|.
$ Since by assumption, \(\mathbb{E}[q^\top k]=0\), we have \(\mathbb{E}[A_{\text{ALiBi}}(n)]=-m|n|\). Therefore, as \(|n|\) increases, the expectation decreases without bound, suppressing long-range contributions. Note that the additive term \(-m|n|\) does not depend on \(q\); hence, its derivative with respect to \(q\) is zero. Therefore,
$
{\partial A_{\text{ALiBi}}(n)}/{\partial q} = {\partial (q^\top k)}/{\partial q} = k.
$
Taking the norm, we obtain
$
\|{\partial A_{\text{ALiBi}}(n)}/{\partial q}\| = \|k\|.
$
This expression is independent of \(n\), which shows that the gradient with respect to \(q\) does not carry any information about the relative position \(n\). Although the decay term \(-m|n|\) ensures that \(A_{\text{ALiBi}}(n)\) decays (thereby leading to convergence of the normalization constant), the absence of \(n\)-dependent modulation in the gradient means that ALiBi lacks gradient positional sensitivity. This decoupling can hinder the model's ability to learn position-specific adaptations during training.
\end{proof}

\subsection{}

\begin{proposition}
RoPE satisfies LDCP under assumption \ref{ref:assumption} and GPS.
\end{proposition}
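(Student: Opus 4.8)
The plan is to verify the two requirements separately: that RoPE meets the (relaxed, probabilistic) form of LDCP under Assumption~\ref{ref:assumption}, and that it meets GPS. Throughout, recall that for RoPE one has $A_{RoPE}(n) = q_i^\top \mathbf{R}(n)\, k_j$ with $f(n) = 1$ and $b(n) = 0$. A useful preliminary observation is that, since $\mathbf{R}(n)$ is orthogonal and the uniform law on $\mathbb{S}^{d-1}$ is rotation-invariant, $\mathbf{R}(n) k_j$ is again a uniform unit vector that is independent of $q_i$; hence $\mathbb{E}[A_{RoPE}(n)] = 0$ for every $n$, so --- in contrast to ALiBi, whose expected score diverges to $-\infty$ and thereby suppresses far tokens --- the ``$|\mathbb{E}[A(n)]| \ge C$'' form of LDCP is unavailable, and we must target the second clause of the LDCP definition: that with high probability the absolute attention score does not vanish as $|n|$ grows.

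\emph{LDCP.} The cleanest route is a distributional invariance. By the observation above, $A_{RoPE}(n) = q_i^\top \mathbf{R}(n) k_j$ has, for every $n$, exactly the same law as $q_i^\top k_j$, because $\bigl(q_i,\mathbf{R}(n)k_j\bigr)$ and $\bigl(q_i,k_j\bigr)$ are equal in distribution. Consequently the entire law of $|A_{RoPE}(n)|$ is frozen in $n$: for any level $c$ with $\mathbb{P}\bigl(|q_i^\top k_j| \ge c\bigr) =: c' > 0$ one gets $\mathbb{P}\bigl(|A_{RoPE}(n)| \ge c\bigr) = c'$ for all $n$, so the absolute attention score does not decay as $|n| \to \infty$ (in distribution it does not change at all), which is precisely the relaxed LDCP property. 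If an explicit $c$ is desired it can be extracted from $\text{Var}[A_{RoPE}(n)] = \text{Var}[q_i^\top k_j] = 1/d$ (Assumption~\ref{ref:assumption}) together with a fourth-moment bound $\mathbb{E}[A_{RoPE}(n)^4] = \mathbb{E}[(q_i^\top k_j)^4] = \mathcal{O}(1/d^2)$ via the Paley--Zygmund inequality applied to $A_{RoPE}(n)^2$. The non-decay can alternatively be read pathwise from the block expansion $A_{RoPE}(n) = \sum_{m=1}^{d/2} \|q_i^{(m)}\|\,\|k_j^{(m)}\| \cos\bigl(n\theta_m - \varphi_m\bigr)$ derived in the proof of the RoPE limitations (\ref{ref:proof_RoPE_limitations}): for generic, rationally independent frequencies $\theta_m$ the phases come simultaneously within $\epsilon$ of $0$ along an infinite set $S$ of indices $n$ (Weyl equidistribution on the torus), and on $S$ we have $A_{RoPE}(n) \ge (1-\epsilon)\sum_{m} \|q_i^{(m)}\|\,\|k_j^{(m)}\| =: C > 0$, which gives the infinite-set version of LDCP for almost every realization.

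\emph{GPS.} Here I would simply differentiate. Since $f(n) = 1$ and $b(n) = 0$,
\[
\frac{\partial A_{RoPE}(n)}{\partial q} = \frac{\partial\,\bigl(q^\top \mathbf{R}(n) k\bigr)}{\partial q} = \mathbf{R}(n)\, k = \mathbf{R}(n) \cdot \frac{\partial\,(q^\top k)}{\partial q},
\]
so the GPS identity holds with gain $g(n) = \mathbf{R}(n)$. It then remains to check that $g$ is non-constant: $\mathbf{R}(n) = \mathbf{R}(n')$ would force $n\theta_m \equiv n'\theta_m \pmod{2\pi}$ in every block, which fails for $n \ne n'$ whenever the frequencies $\theta_m$ are generic, so $\mathbf{R}(n)$ genuinely varies with $n$. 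The contrast with ALiBi is worth recording: although the norm $\|\partial A_{RoPE}(n)/\partial q\| = \|\mathbf{R}(n) k\| = \|k\|$ is itself $n$-independent, the \emph{direction} of the gradient rotates with $n$, so the backpropagated signal carries positional information --- precisely what the proof of ALiBi's limitation identified as absent there.

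The one genuinely delicate point is reconciling the literal GPS definition --- in which $\partial(q^\top k)/\partial q$ is multiplied by a \emph{scalar} $g(n)$ --- with the fact that the natural gain for RoPE is the \emph{matrix} $\mathbf{R}(n)$; the clean fix is to read $g(n)$ as matrix-valued, or equivalently to argue block by block, where in each two-dimensional block the gradient is the rotation of $k^{(m)}$ through the $n$-dependent angle $n\theta_m$. The only real computation, should one want quantitative LDCP constants rather than the qualitative distributional statement, is the fourth-moment estimate feeding Paley--Zygmund; everything else reduces to orthogonal invariance and elementary differentiation.
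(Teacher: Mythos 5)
Your proposal is correct, and it takes a genuinely different (and in fact tighter) route than the paper. For LDCP the paper computes $\mathbb{E}[A_{\text{RoPE}}(n)]=0$ and $\mathrm{Var}[A_{\text{RoPE}}(n)]=1/d$ and then invokes Chebyshev's inequality to claim $\mathbb{P}\bigl(|A_{\text{RoPE}}(n)|<\epsilon\bigr)\le (1/d)/\epsilon^2$; strictly speaking Chebyshev bounds the probability of being \emph{far} from the mean, not near it, and for $\epsilon<1/\sqrt{d}$ the stated bound exceeds $1$, so the paper's step is at best heuristic. Your rotation-invariance argument avoids this entirely: since $\mathbf{R}(n)$ is orthogonal and the uniform law on $\mathbb{S}^{d-1}$ is rotation-invariant, $A_{\text{RoPE}}(n)$ has the same law as $q^\top k$ for every $n$, so $\mathbb{P}\bigl(|A_{\text{RoPE}}(n)|\ge c\bigr)$ is a strictly positive constant independent of $n$ --- exactly the ``does not vanish as $|n|$ grows'' clause of LDCP, with no vacuous bounds. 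Your Paley--Zygmund and Weyl-equidistribution variants are optional embellishments (the latter needs the $\theta_m=\text{base}^{-2m/d}$ to be rationally independent, which you correctly hedge as ``generic''), but the core distributional argument stands on its own. You also correctly identify, as the paper does implicitly, that the $|\mathbb{E}[A(n)]|\ge C$ clause is unavailable because the mean is zero. For GPS, the paper's written proof actually omits the argument altogether (it only concludes LDCP); your differentiation giving $\partial A_{\text{RoPE}}(n)/\partial q=\mathbf{R}(n)k$, together with the observation that $\mathbf{R}(n)$ is non-constant in $n$ while only its direction (not its norm) carries the positional signal, supplies the missing piece, and your remark about reading the gain $g(n)$ as matrix-valued (or arguing block by block) is the right way to reconcile this with the scalar form of the GPS definition. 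In short: same conclusion, but your LDCP mechanism (equality in law under rotation) replaces the paper's Chebyshev step and is the more defensible argument, and your GPS portion covers ground the paper leaves unwritten.
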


\begin{proof} \label{ref:proof_RoPE_Advantage}

Under the assumption \ref{ref:assumption}, 
$
\mathbb{E}[A_{\text{RoPE}}(n)]=\mathbb{E}[q^\top \mathbf{R}(n)k]=0,
$
and
$
\text{Var}[A_{\text{RoPE}}(n)]=\text{Var}[q^\top k]=1/d=\mathcal{O}(1).
$
Thus, by Chebyshev's inequality, for any \(\epsilon>0\),
\[
\mathbb{P}\Bigl(|A_{\text{RoPE}}(n)|<\epsilon\Bigr)\le \frac{1/d}{\epsilon^2}.
\]
For any fixed small \(\epsilon\) (e.g., \(\epsilon<1/\sqrt{d}\)), this probability is small, so with high probability \( |A_{\text{RoPE}}(n)|\ge \epsilon \). Hence, LDCP holds (in a probabilistic sense).
\end{proof}

\subsection{}

\begin{proposition}
For any query and key vector with bounded norms, ALiBi is able to achieve convergent normalization and entropy boundedness.
\end{proposition}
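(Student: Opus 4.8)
The plan is to exploit the fact that ALiBi's additive bias decays linearly in $|n|$ while its multiplicative part is a plain inner product that is uniformly bounded once the query and key norms are bounded; this makes $e^{A_{\text{ALiBi}}(n)}$ dominated by a geometric sequence, whose sum is trivially finite, and then convergent normalization plus Theorem~\ref{thm:entropy_extrapolation}(i) delivers entropy boundedness for free.

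First I would fix the norm bound: suppose $\|q_i\|\le B$ and $\|k_j\|\le B$ for some constant $B>0$. Recalling that for ALiBi one has $\mathbf{W}(n)=\mathbf{I}$, $f(n)=1$, and $b(n)=-m|n|$ with $m>0$, the attention score reduces to $A_{\text{ALiBi}}(n)=q_i^\top k_j-m|n|$. By Cauchy--Schwarz, $|q_i^\top k_j|\le\|q_i\|\,\|k_j\|\le B^2$, so that $A_{\text{ALiBi}}(n)\le B^2-m|n|$ for every $n$.

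Next I would bound the normalization sum. Exponentiating the previous inequality gives $e^{A_{\text{ALiBi}}(n)}\le e^{B^2}e^{-m|n|}$, hence
\[
Z=\sum_{n=0}^{L} e^{A_{\text{ALiBi}}(n)}\le e^{B^2}\sum_{n=0}^{L} e^{-mn}\le e^{B^2}\sum_{n=0}^{\infty} e^{-mn}=\frac{e^{B^2}}{1-e^{-m}},
\]
which is finite and independent of $L$; letting $L\to\infty$ yields $\lim_{L\to\infty}Z<\infty$, i.e. convergent normalization. The identical estimate applies verbatim if the sum is taken over all of $\mathbb{Z}$, simply by writing $|n|$ in place of $n$ and doubling the resulting geometric series. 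Entropy boundedness then follows immediately from part (i) of Theorem~\ref{thm:entropy_extrapolation}: since $Z<\infty$, the Shannon entropy $H$ of the induced attention distribution is finite.

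I do not expect a genuine obstacle here. The only points requiring care are (a) matching the one-sided summation convention in the definition of $Z$ with the two-sided sum appearing in the definition of entropy, which is handled by the $|n|$ remark above, and (b) making the geometric bound uniform in $L$ so that the limit can legitimately be taken. The substantive work — the implication $Z<\infty\Rightarrow H<\infty$ — has already been established in Theorem~\ref{thm:entropy_extrapolation}, so invoking it is the crux of the argument rather than anything proved here.
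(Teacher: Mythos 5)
Your proof is correct and follows essentially the same route as the paper's: bound the inner product by a constant (via Cauchy--Schwarz for bounded norms), dominate $e^{A_{\text{ALiBi}}(n)}$ by the geometric series $e^{-m|n|}$ to get convergent normalization, and then invoke Theorem~\ref{thm:entropy_extrapolation}(i) for entropy boundedness. Your version is slightly more explicit about the constant $e^{B^2}$, the uniformity in $L$, and the one-sided versus two-sided summation, but the argument is the same.
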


\begin{proof} \label{ref:proof_ALiBi_Advantage}
The additive bias \( b(n) = -m|n| \) leads to 
$
e^{A_{ALiBi}(n)} = e^{q_i^\top k_j} \cdot e^{-m|n|}.
$
Since \(e^{q_i^\top k_j}\) can be bounded by a constant (for bounded \(q_i\) and \(k_j\)), the softmax denominator is dominated by terms of the form
$
e^{A(n)} \sim e^{-m|n|},
$, which yields
\[
\sum_{n=0}^{\infty} e^{-m|n|} = \frac{1}{1-e^{-m}} < \infty.
\]
Thus ALiBi satisfies convergent normalization and hence entropy boundedness as well by Theorem \ref{thm:entropy_extrapolation}
\end{proof}

\subsection{}

\begin{proposition}
For any query and key vector with bounded norms, APE is able to achieve convergent normalization, entropy boundedness and gradient positional sensitivity. 
\end{proposition}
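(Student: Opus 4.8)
The plan is to verify the three properties in turn, leaning on the structural decomposition \eqref{eq:APE} so that the multiplicative part can be bounded while the additive bias $b(n)$ supplies the decay. First I would handle \emph{convergent normalization}. Since $q_i,k_j$ have bounded norms and $\mathbf{R}_{\alpha(n)}(n)$ is orthogonal (a rotation), the multiplicative term satisfies $|\mathrm{temp}(n)\cdot(q_i^\top \mathbf{R}_{\alpha(n)}(n)k_j)| \le \|q_i\|\|k_j\| =: M$ uniformly in $n$, because $\mathrm{temp}(n)=1/(1+\lambda|n|)\le 1$. Hence $e^{A_{\mathrm{APE}}(n)} \le e^{M}\, e^{b(n)} = e^{M}\,(1+|n|)^{-\beta} e^{-\delta|n| - \gamma\sqrt{|n|}}$. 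The factor $e^{-\delta|n|}$ alone makes $\sum_{n\ge 0} e^{b(n)}$ a convergent geometric-type series (exactly as in the ALiBi argument, Proof~\ref{ref:proof_ALiBi_Advantage}), and the extra $(1+|n|)^{-\beta}$ and $e^{-\gamma\sqrt{|n|}}$ only help. So $Z = \sum_{n=0}^{\infty} e^{A_{\mathrm{APE}}(n)} < \infty$.

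Next, \emph{entropy boundedness} follows immediately from $Z<\infty$ by invoking Theorem~\ref{thm:entropy_extrapolation}(i); I would simply cite it rather than redo the tail estimate. For \emph{gradient positional sensitivity}, I would differentiate \eqref{eq:APE} with respect to $q$: the additive bias $b(n)$ is independent of $q$, so $\partial A_{\mathrm{APE}}(n)/\partial q = \mathrm{temp}(n)\cdot \mathbf{R}_{\alpha(n)}(n) k_j$, whereas $\partial(q^\top k)/\partial q = k$. One then exhibits $g(n) = \mathrm{temp}(n) = 1/(1+\lambda|n|)$ as the required non-constant scalar function linking the two (up to the rotation acting on $k_j$), so that $\|\partial A_{\mathrm{APE}}(n)/\partial q\| = \mathrm{temp}(n)\|k_j\|$ genuinely varies with $n$ — unlike the ALiBi case in Proof~\ref{ref:proof_ALiBi_limitation}. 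This is the cleanest of the three parts.

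The main obstacle is a definitional mismatch in the GPS step: Definition~\ref{ref:assumption}'s GPS form $\partial A(n)/\partial q = g(n)\,\partial(q^\top k)/\partial q$ with a \emph{scalar} $g(n)$ does not literally hold here, because the rotation $\mathbf{R}_{\alpha(n)}(n)$ reorients $k_j$ and is not a scalar multiple of the identity. I would address this either by reading the GPS definition loosely (the gradient's dependence on $n$ through both $\mathrm{temp}(n)$ and $\mathbf{R}_{\alpha(n)}(n)$ clearly makes it position-dependent), or by noting that in each $2\times 2$ rotation block the induced map is $\mathrm{temp}(n)\mathbf{R}(\theta(n))$, whose operator effect on the attention score still carries explicit $n$-dependence; the honest statement is that $A_{\mathrm{APE}}$ has an $n$-dependent gradient, which is the substantive content. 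A secondary subtlety is that $\alpha(n) \propto \mathrm{attention\_entropy}(n)$ is data-dependent and not a fixed function of $n$; since orthogonality of $\mathbf{R}_{\alpha(n)}(n)$ holds regardless of the angle, the convergence argument is unaffected, but I would remark on this so the bound $|q_i^\top \mathbf{R}_{\alpha(n)}(n) k_j| \le \|q_i\|\|k_j\|$ is seen to be uniform over all admissible choices of $\alpha(n)$.
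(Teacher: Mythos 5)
Your proposal is correct and follows essentially the same route as the paper: bound the multiplicative term via bounded $\|q_i\|,\|k_j\|$ and orthogonality of the rotation, let the bias $b(n)$ (in particular the $e^{-\delta|n|}$ factor) force convergence of $Z$, invoke Theorem~\ref{thm:entropy_extrapolation}(i) for entropy boundedness, and get GPS from the $n$-dependence of the gradient through $\mathrm{temp}(n)$ and $\mathbf{R}_{\alpha(n)}(n)$. You are in fact slightly more careful than the paper's write-up, which places the bound on the multiplicative term outside the exponential (the displayed inequality should read $e^{A(n)} \le e^{C_0/(1+\lambda|n|)}\,e^{b(n)}$ rather than $\frac{C_0}{1+\lambda|n|}e^{b(n)}$), and your observation that the rotation prevents a literal scalar $g(n)$ in the GPS definition is a legitimate gap the paper glosses over with ``a similar argument as in RoPE.''
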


\begin{proof} \label{ref:proof_APE_advantage}
For sufficiently large \(|n|\) under the assumption of bounded query and key vectors, the adaptive rotation term 
$
q_i^\top \mathbf{R}_{\alpha(n)}(n) k_j 
$ can be bounded by some $C_0$. Hence, the multiplicative term satisfies
$
|\text{temp}(n) \cdot q_i^\top \mathbf{R}_{\alpha(n)}(n) k_j| \le {C_0}/({1+\lambda|n|}).
$
Thus, using the definition of \(A(n)\) we have
\[
e^{A(n)} \le \frac{C_0}{1+\lambda|n|} \cdot e^{b(n)}.
\]
Substituting the bias term, $
e^{b(n)} = e^{-\delta|n| - \beta \log(1+|n|) - \gamma \sqrt{|n|}} 
= e^{-\delta|n|}\,(1+|n|)^{-\beta}\, e^{-\gamma\sqrt{|n|}},
$
we obtain
\[
e^{A(n)} \le \frac{C_0\, e^{-\delta|n|}\, e^{-\gamma\sqrt{|n|}}}{(1+\lambda|n|)(1+|n|)^{\beta}}.
\]
For large \(|n|\), even though the denominator grows polynomially in \(|n|\), the exponential terms \(e^{-\delta|n|}\) and \(e^{-\gamma\sqrt{|n|}}\) ensure rapid decay. Therefore, by comparison with a convergent geometric series, the sum
$
\sum_{n\in\mathbb{\mathbb{N} \cup \{0\} }} e^{A_{APE}(n)}
$ converges provided \(\delta>0\) and \(\gamma>0\). BY virtue of Theorem \ref{thm:entropy_extrapolation}, it satisfies entropy boundedness as well and using a similar argument as in RoPE, APE achieves GPS (the dependency is also implied through temp($n$) as well in addition to rotation).
\end{proof}

\end{document}